\def\eqref#1{Eq.~(\ref{#1})}
\def\1{\bm{1}}
\def\vx{{\bm{x}}}
\def\mX{{\bm{X}}}
\def\mZ{{\bm{Z}}}
\DeclareMathAlphabet{\mathsfit}{\encodingdefault}{\sfdefault}{m}{sl}
\SetMathAlphabet{\mathsfit}{bold}{\encodingdefault}{\sfdefault}{bx}{n}
\def\gG{{\mathcal{G}}}
\def\gH{{\mathcal{H}}}
\def\gX{{\mathcal{X}}}
\def\gY{{\mathcal{Y}}}
\def\gZ{{\mathcal{Z}}}
\newcommand{\E}{\mathbb{E}}
\newcommand{\R}{\mathbb{R}}
\DeclareMathOperator*{\argmin}{arg\,min}
\newtheorem{problem}{\textbf{Problem}}
\newtheorem{theorem}{\textbf{Theorem}}
\newtheorem{theorem*}{Theorem}
\newtheorem{lemma}{Lemma}
\newtheorem{Definition}{Definition}
\theoremstyle{plain}
\theoremstyle{remark}
\icmltitlerunning{Diversity-enhancing Generative Network for Few-shot Hypothesis Adaptation}
\begin{document}

\twocolumn[
\icmltitle{Diversity-enhancing
Generative Network for Few-shot Hypothesis Adaptation}



\icmlsetsymbol{equal}{*}

\begin{icmlauthorlist}
\icmlauthor{Ruijiang Dong}{hkbu,equal}
\icmlauthor{Feng Liu}{uom,riken,equal}
\icmlauthor{Haoang Chi}{taii}
\icmlauthor{Tongliang Liu}{mbzuai,usyd,riken}
\icmlauthor{Mingming Gong}{uom}\\
\icmlauthor{Gang Niu}{riken}
\icmlauthor{Masashi Sugiyama}{riken,utokyo}
\icmlauthor{Bo Han}{hkbu,riken}

\end{icmlauthorlist}

\icmlaffiliation{hkbu}{Department of Computer Science, Hong Kong Baptist University}
\icmlaffiliation{uom}{School of Mathematics and Statistics, The University of Melbourne}
\icmlaffiliation{mbzuai}{Mohamed bin Zayed University of Artificial Intelligence}
\icmlaffiliation{usyd}{Sydney AI Centre, The University of Sydney}
\icmlaffiliation{riken}{Center for Advanced Intelligence Project, RIKEN}
\icmlaffiliation{utokyo}{Graduate School of Frontier Sciences, The University of Tokyo}
\icmlaffiliation{taii}{Tianjin Artificial Intelligence Innovation Center}

\icmlcorrespondingauthor{Bo Han}{bhanml@comp.hkbu.edu.hk}

\icmlkeywords{Machine Learning, ICML}

\vskip 0.3in
]



\printAffiliationsAndNotice{\icmlEqualContribution} 

\begin{abstract}
Generating unlabeled data has been recently shown to help address the \emph{few-shot hypothesis adaptation} (FHA) problem, where we aim to train a classifier for the target domain with a few labeled target-domain data and a well-trained source-domain classifier (i.e., a source hypothesis), for the additional information of the highly-compatible unlabeled data. 
However, the generated data of the existing methods are extremely similar or even the same. The strong dependency among the generated data will lead the learning to fail. In this paper, we propose a \emph{diversity-enhancing generative network} (DEG-Net) for the FHA problem, which can generate diverse unlabeled data with the help of a kernel independence measure: the \emph{Hilbert-Schmidt independence criterion} (HSIC). 
Specifically, DEG-Net will generate data via minimizing the HSIC value (i.e., \emph{maximizing the independence}) among the semantic features of the generated data. 
By DEG-Net, the generated unlabeled data are more diverse and more effective for addressing the FHA problem. 
Experimental results show that the DEG-Net outperforms existing FHA baselines and further verifies that generating diverse data plays a vital role in addressing the FHA problem. 
\end{abstract}

\begin{figure*}
\hspace{.4in} 
\subfigure[The copy issue of generated data on $M\to S$ task]{
  \label{fig:copy}
  \includegraphics[width=0.3\linewidth]{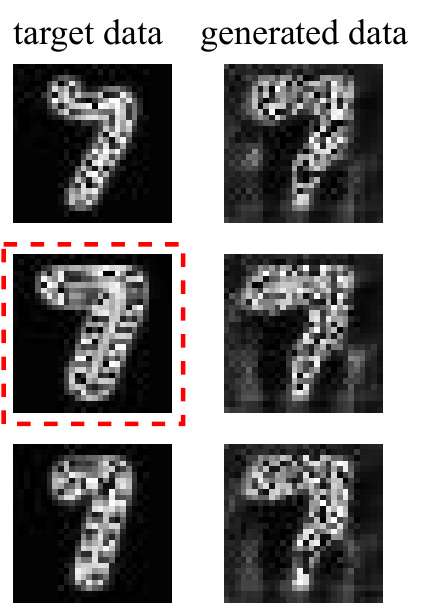}
  }~~~~
\subfigure[Classification accuracy of different amount of unlabeled data drawn from different domains on $M\to S$ task]{
    \label{fig:copy_exp}
  \includegraphics[width=0.6\linewidth]{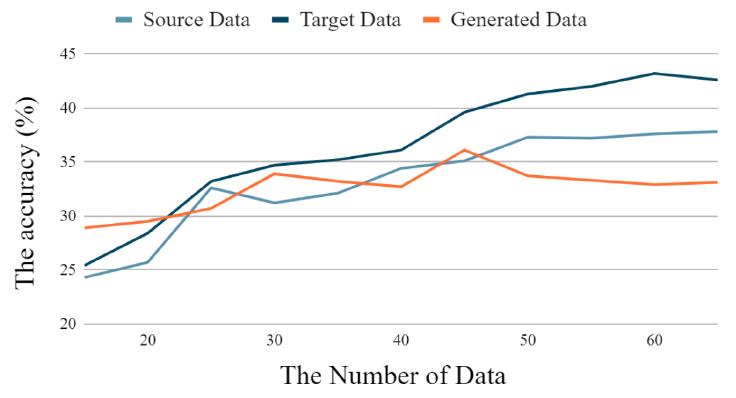}
}
\vspace{-1em}
\caption{\footnotesize The low-diversity issue of generated unlabeled data when solving the FHA problem. Subfigure (a) illustrates the labeled data (left) drawn from the target domain and unlabeled data (right) generated by TOHAN on the MNIST$\to$SVHN ($M\to S$) task. It is clear that the generated data are similar to each other and seem to copy the original target data (middle, left). Subfigure (b) illustrates the accuracy of the data drawn from different domains with different data volumes on the task $M\to S$. For the source data and target data, the accuracy of the model trained by them is higher as the number of data increases. For the generated data, the growth of data volume only helps to improve the accuracy of the model when it is small.}
\label{fig:motivation}

\end{figure*}
\section{Introduction}\label{sec:intro}
Data and expert knowledge are always scarce in newly-emerging fields, thus it is very important and challenging to study how to leverage knowledge from other similar fields to help complete tasks in the new fields. To cope with this challenge, transfer learning methods were proposed to leverage the knowledge of source domains (e.g., data in source domains or models trained with data in source domains) to help complete the tasks in other similar domains (a.k.a. the target domains) \citep{zamir2018taskonomy,sun2019meta,liu2019butterfly,wang2019transfer,Zhang2020clarinet,teshima2020few,jing2020dynamic,Liu2021meta,Zhong2021how,zhou2021domain,shui2022novel,guo2022deep,fang2022semi,Chi2022Meta,wang2022gap}. Among many transfer learning methods, \emph{hypothesis transfer learning} (HTL) methods have received a lot of attention since it does not require access to the data in source domains, which prevents the data leakage issue and protects the data privacy \citep{du2017hypothesis,yang2021exploiting,yang2021generalized}. 
Recently, the \emph{few-shot hypothesis adaptation} (FHA) problem has been formulated to make HTL more realistic, which is suitable to solve many problems \citep{liu2020universal,liu2020isometric,snell2017prototypical,wang2020generalizing,yang2020free}. In FHA, only a well-trained source classifier (i.e., a source hypothesis) and a few labeled target data are available \citep{chi2021tohan}. 

Similar to HTL, FHA also aims to obtain a good target-domain classifier with the help of a source hypothesis and a few target-domain data \citep{chi2021tohan,motiian2017few}. Recently, \emph{generating unlabeled data} has been shown to be an effective strategy to address in the FHA problem \citep{chi2021tohan}. The \emph{target-oriented hypothesis adaptation network} (TOHAN), a one-step solution to the FHA problem, constructed an intermediate domain to enrich the training data. The data in intermediate domain are highly compatible with data in source domain and target domain \citep{Balcan2010adis}. By the generated unlabeled data in the intermediate domain, TOHAN partially overcame the problems caused by data scarcity in the target domain.

However, the existing methods \emph{ignore} the diversity of the generated data or the independence among the generated data, so that the generated data are extremely similar or even the same. Lack of diversity leads to less effective data for addressing the FHA problem. 
Taking the FHA task of digits datasets as an example, we find that the data generated by TOHAN has an issue that the generator tends to generate similar data as the target data and even \emph{copy} target data (Figure~\ref{fig:copy}). To show how \emph{diversity} matters in the FHA problem intuitionally, we conduct the experiments in the digits datasets. We use a few labeled target-domain data and the increasing unlabeled data to train the target model. The result is shown in Figure~\ref{fig:copy_exp}. For the source data and target data, it is clear that the accuracy of the trained model is higher as the number of data increases. For the generated data, the growth of data volume only helps to improve the accuracy of the model when it is small (e.g., less than 45 in Figure~\ref{fig:copy_exp}). However, the accuracy of the model fluctuates around 33\% regardless of the increase in the unlabeled data, when the number of data exceeds 35. This result shows that the model trained by generated data converges faster than those trained by the source data and target data since the generated data have less diversity.

In this paper, to show how the diversity of unlabeled data (i.e., the independence among unlabeled data) affects the FHA methods, we theoretically analyze the effect of the sample complexity regarding the FHA problem (Theorem~\ref{thm:sample}). In this analysis, we adopt the log-coefficient score $\alpha$ \citep{Dagan2019learning} to measure the dependency among unlabeled data. Our results show that we can still count on the unlabeled data to help address the FHA problem as long as the unlabeled data are weakly dependent ($\alpha<0.5$). Nevertheless, once $\alpha\geq 0.5$, the results in Theorem~\ref{thm:sample} may not hold, resulting in failure theoretically. In addition, we find that high dependency among unlabeled data usually means that we need more unlabeled data to obtain a good target-domain classifier. From the above analysis and Figure~\ref{fig:motivation}, we argue that diversity matters in addressing the FHA problem.

To this end, we propose the \emph{diversity-enhancing generative network} (DEG-Net) for the FHA problem, which is a weight-shared conditional generative method equipped with a kernel independence measure: the \emph{Hilbert-Schmidt independence criterion} (HSIC) \citep{gretton2005measuring,ma2020hsic,pogodin2020kernelized}, which is used in various situations, e.g., clustering \citep{song2007dependence,blaschko2008learning}, independence testing \citep{gretton2007kernel}, self-supervised classification \citep{li2021self}, and model-inversion attack \citep{Peng2022bilateral}. Although the log-coefficient score is used to analyze the effect of the sample complexity regarding the FHA problem, its calculation requires knowing the target-domain distribution, which is unknown in practice. Yet, HSIC can be estimated easily by the data sample. Thus, we adopt the HSIC to calculate the dependency among generated unlabeled data. 

The overview of DEG-Net is in Figure~\ref{fig:method}, showing that there are two modules in DEG-Net: the generation module and the adaptation module. In the generation module, we train the conditional generator with a well-trained source classifier and a few-target domain data. To train the generator with both the source-domain and target-domain knowledge and improve the diversity of generated data simultaneously, the generative loss of DEG-Net consists of 3 parts: classification loss, similarity loss, and diversity loss. More specifically, DEG-Net generates data via minimizing the HSIC value (i.e., \emph{maximizing the independence}) between the semantic features of the target data and generated data, where the semantic features are the hidden-layer outputs of the well-trained source hypothesis. To use the generalization knowledge in the semantic features of data shared by different classes \citep{chen2020mate,chi2021demystifying,yao2021improving}, the generator is a weight-shared network. As for the adaptation module, the classifier is trained using the generated data and a few labeled target-domain data. The adaptation module consists of a classifier and a group discriminator \citep{chi2021tohan,motiian2017few}. With the help of the group discriminator which tries to confuse the classifier to distinguish data from different domains, the classifier is trained to classify the data over the target domain using the generated data and a few label data drawn from the target domain.

We verify the effectiveness of DEG-Net on 8 FHA benchmark tasks \citep{chi2021tohan}, including the digit datasets and object datasets. Experimental results show that DEG-Net outperforms existing FHA methods and achieves state-of-the-art performance. Besides, due to the weight-shared generator, DEG-Net is much faster than previous generative FHA methods in the training. We also conduct an ablation study to verify that each component in the DEG-Net is useful, which shows that diverse generated data can help improve the performance when addressing the FHA problem, which lights up a novel road for the FHA problem.

\section{Preliminary and Related Works}

\textbf{Problem Setting.} In this section, we formalize the FHA problem mathematically. Denoting by $\mathcal{X} \subset \mathbb{R}^n$ the input space and by $\mathcal{Y} :=\{1,\cdots,K\}$ the output space, where $K$ is the number of classes. The source domain (target domain) can be defined as a joint probability distribution $\mu^{\mathrm{s}}$ on $\mathcal{X} \times \mathcal{Y}$ ($\mu^{\mathrm{t}}$ on $\mathcal{X} \times \mathcal{Y}$). Besides, we assume that there is a well-trained model $f^s: \gX \rightarrow \gY$. The $f^s$ is trained with data $\{\vx^s_i,y^s_i\}_{i=1}^{n}$ drawn \emph{independently and identically distributed} (i.i.d) from $\mu^s$ with the aim of minimizing $\hat{\E}_{(\vx,y)\sim \mu^{\mathrm{s}}}[\ell(h(\vx),y)]$, where $\ell:\gY \times \gY$ is a loss function to measure if two elements in $\gY$ are close, and $h$ is an element in a hypothesis set $\gH:=\{h:\gX \rightarrow \gY\}$. Thus, $f^s$ can be defined as
\begin{align}
\label{eq:fs}
    f^s:=\argmin_{h\in\gH}\hat{\E}_{(X^s,Y^s)}[\ell(h(X^s),Y^s)].
\end{align}
$f^s$ is also called the \emph{source hypothesis} in our paper. 
Hence, the FHA problem is defined as follows:

\begin{problem}(FHA)
Given the source hypothesis $f^s$ defined in \eqref{eq:fs} and the labeled dataset $S^t:=\{(\vx^t_i,y^t_i)\}_{i=1}^{m_{\rm l}}$ ($m_{\rm l}\leq 7K$, following \citet{chi2021tohan} and \citet{Park_2019_ICCV}), drawn i.i.d. from target domain $\mu^t$, 
the FHA methods aim to train a classifier $f^t \in \gH$ with $f^s$ and $S^t$ such that we can minimize the value of ${\E}_{(X^t,Y^t)}[\ell(h(X^t),Y^t)]$, where $h\in\gH$. Namely, $f^t = \argmin_{h\in\gH}{\E}_{(X^t,Y^t)}[\ell(h(X^t),Y^t)]$.

\end{problem}
\textbf{Hypothesis Transfer Learning Methods.}
\emph{Hypothesis transfer learning} (HTL) aims to train a classifier with only a well-trained classifier and a few labeled data or abundant unlabeled data over the target domain \citep{kuzborskij2013stability,tommasi2010safety,liang2020we}. In \citet{kuzborskij2013stability}, they used the leave-one-out error to find the optimal transfer parameters based on the regularized least squares with biased regularization. SHOT \citep{liang2020we} froze the source hypothesis and trained a domain-specific encoding module using the abundant unlabeled data. Later, neighborhood reciprocity clustering \citep{yang2021exploiting} was proposed to address HTL by encouraging reciprocal neighbors to concord in their label prediction. HTL problem does not have a limitation on the amount of labeled data drawn from the target domain for the transfer training, which is different from the FHA problem.

\textbf{Target-oriented Hypothesis Adaptation Network.} 
TOHAN \citep{chi2021tohan} is a one-step solution for the FHA problem. It has a good performance due to using the generated unlabeled data in the adaptation process. Motivated by the learnability in \emph{semi-supervised learning} (SSL), TOHAN found that unlabeled data in the intermediate domain, which is compatible with both the source classifier and target classifier, can address the FHA problem for providing the additional information in the training. Guided by this principle, the key module of TOHAN is to generate the unlabeled data drawn from the probability distribution $\mu^{\mathrm{m}}$:
\begin{align}
\label{equ:inter_domain}
\mu^m=\mathop{\arg\min}\limits_{\mu} \left [ \chi(h^s,\mu)+\chi(h^t,\mu)  \right ],
\end{align}
where $\chi(\cdot,\cdot)$ measures how compatible $h^s$ (resp. $h^t$) is with data distribution $\mu$ \citep{Balcan2010adis}.

\section{Theoretical Analysis Regarding the Data Diversity in FHA}
In previous works, researchers have shown that generated high-compatible data can help address the FHA problem. However, as discussed in Section~\ref{sec:intro}, the diversity of the generated data matters in addressing the FHA. Besides, the previous studies assume that the generated data is independent of their theory, and is inconsistent with their methods. In this section, we will show how the dependency among the generated data affects the performance of FHA methods. Similar to \citet{chi2021tohan}, our theory is also based on the theory regarding SSL \citep{webb2010encyclopedia}.

\textbf{Dependency Measure.} Following \citet{Dagan2019learning}, we use the log-coefficient to theoretically analyze the data diversity. log-coefficient measures the dependency among observations from a random variable $Z$.

\begin{Definition}[Log-influence and log-coefficient \citep{Dagan2019learning}]
\label{def:log_coe}
Let $\mZ=(Z_1,\dots,Z_m)$ be a random variable over $(\gX \times \gY)^m$ and let $\mu_\mZ$ denote either its probability distribution if discrete or its density if continuous. Assume that $\mu_\mZ>0$ on all $(\gX \times \gY)^m$. For any $i\neq j\in\{1,2,\dots,m\}$, define the log-influence between $j$ and $i$ as 
\begin{equation}
\begin{aligned}
    I^{\log}_{j,i}(\mZ)=\frac{1}{4} \sup_{
    \mbox{\tiny$\begin{array}{c}
    Z_{-i-j}\in (\gX \times \gY)^{m-2} \\
    Z_i,Z_i',Z_j,Z_j'\in \gX \times \gY
    \end{array}$}
    }
    \\ \log\frac{\mu_\mZ[Z_i Z_j Z_{-i-j}]\mu_\mZ[Z_i' Z_j' Z_{-i-j}]}{\mu_\mZ[Z_i' Z_j Z_{-i-j}]\mu_\mZ[Z_i Z_j' Z_{-i-j}]}. 
\end{aligned}
\end{equation}
Then the log-coefficient of $\mZ$ is defined as $\alpha_{\log}(\mZ)=\max_{i=1,\dots,m}\sum_{i\neq j}I^{\log}_{j,i}(\mZ)$.
\end{Definition}

From Definition~\ref{def:log_coe}, it is clear that $\alpha_{\log}(\mZ)$ will be zero if $Z_i$ and $Z_j$ are independent (for any $i \neq j$). 

\textbf{Sample Complexity Analysis for FHA.} Since the generated data are unlabeled, we follow the theory regarding SSL to analyze how the generated unlabeled data can help address the FHA problem. More importantly, we will analyze how the dependency on the generated data affects the performance of FHA methods. For simplicity, we consider a binary SSL problem (i.e., $K=2$). 

Let $f^*:\mathcal{X}\to\{0,1\}$ be the optimal target classifier. 
Let $err(h)=\mathbb{E}_{x\sim \mu_X^t}[h(x)\neq f^{*}(x)]$ be the true error rate of a hypothesis $h\in\gH$ over a marginal distribution $\mu_X^t$. In FHA, its learnability mainly depends on the compatibility $\chi:\mathcal{H}\times\mathcal{X}\mapsto[0,1]$ measuring how ``compatible'' $h$ is to one unlabeled data $\vx$. In the following, we use $\chi(h,\mu_X^t)=\mathbb{E}_{x\sim \mu_X^t}[\chi(h,x)]$ to represent the expectation of the compatibility of data from $\mu_X^t$ on a classifier $h$, and let $S_X^{(m_{\rm u})}$ be an observation of a random variable $\mX^{t,m_{\rm u}}=(X^t_1,\dots,X^t_{m_{\rm u}})$, where the distribution regarding $X^t_i$ is $\mu_X^t$, $i=1,\dots,m_{\rm u}$. The following theorem shows that, under some conditions, we can learn a good $f^t$ even when the dependency among unlabeled target data exists.


\begin{restatable}{theorem}{thmCore}
\label{thm:sample}
Let $\hat{\chi}(h,S_X^{(m_{\rm u})})=\frac{1}{m_{\rm u}}\sum_{x\in S_X^{(m_{\rm u})}}\chi(h,x)$ be the empirical compatibility over $S_X^{(m_{\rm u})}$ and $\mathcal{H}_0 = \{h\in\mathcal{H}:\widehat{err}(h)=0\}$. If $f^*\in\mathcal{H}$, $\chi(f^*,\mu_X^t)=1-t$, and $\alpha_{\log}(\mX^{t,m_{\rm u}})<1/2$, then $m_u$ unlabeled data and $m_l$ labeled data are sufficient to learn to error $\epsilon$ with probability $1-\delta$, for
\begin{equation}
\begin{aligned}
     m_{\rm u}=\max\left(\mathcal{O}\left(\frac{1}{(1-\alpha_{\log}(\mX^{t,m_{\rm u}}))\epsilon^2}\log\frac{2}{\delta}\right),\right. \\ \phantom{=\;\;}
\left.\mathcal{O}\left(\frac{{\rm VCdim}(\chi(\mathcal{H}))}{(1-2\alpha_{\log}(\mX^{t,m_{\rm u}}))\epsilon^2}\right)\right)
\end{aligned}
\end{equation}
   and
\begin{equation}
    m_{\rm l}=\frac{2}{\epsilon}\left[\ln(2\mathcal{H}_{\mu_X^t,\chi}(t+2\epsilon)[2m_{\rm l},\mu_X^t])+\ln\frac{4}{\delta}\right],
\end{equation}
where $\chi(\mathcal{H})=\{\chi_h:h\in\mathcal{H}\}$ is assumed to have a finite VC dimension, $\chi_h(\cdot)=\chi(h,\cdot)$, and $\mathcal{H}_{\mu_X^t,\chi}(t+2\epsilon)[2m_{\rm l},\mu_X^t]$ is the expected number of splits of $2m_{\rm l}$ data drawn from $\mu_X^t$ using hypotheses in $\mathcal{H}$ of compatibility more than $1-t-2\epsilon$. In particular, with probability at least $1-\delta$, we have $err(\hat{h})\le\epsilon$,
where $\hat{h}=\mathop{\arg\max}_{h \in \mathcal{H}_0}\hat{\chi}(h,S_X^{(m_{\rm u})})$.
\end{restatable}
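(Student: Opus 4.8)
The plan is to follow the compatibility-based semi-supervised learnability framework of \citet{Balcan2010adis} that underlies \citet{chi2021tohan}, and to isolate every place where the i.i.d.\ assumption on the unlabeled sample is invoked, replacing each such step with its weakly-dependent counterpart from \citet{Dagan2019learning}. The overall strategy is the classical two-stage argument: first use the $m_{\rm u}$ unlabeled points to prune $\gH$ down to the hypotheses of high \emph{true} compatibility, and then use the $m_{\rm l}$ labeled points to select a low-error hypothesis from this reduced class.

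For the first (unlabeled) stage I would establish two concentration facts for the empirical compatibility $\hat{\chi}(h,S_X^{(m_{\rm u})})$. The pointwise bound at $h=f^*$: since $\chi(f^*,\mu_X^t)=1-t$, a Hoeffding-type inequality guarantees $\hat{\chi}(f^*,S_X^{(m_{\rm u})})\ge 1-t-\epsilon$ with high probability, so that $f^*$ survives the pruning. Because $X^t_1,\dots,X^t_{m_{\rm u}}$ are dependent, I would replace the ordinary Hoeffding bound by the concentration inequality of \citet{Dagan2019learning} valid under a bounded log-coefficient; this inflates the required sample size by the factor $1/(1-\alpha_{\log}(\mX^{t,m_{\rm u}}))$ and yields the first term in the bound on $m_{\rm u}$. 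The uniform bound: $\sup_{h\in\gH}|\hat{\chi}(h,S_X^{(m_{\rm u})})-\chi(h,\mu_X^t)|\le\epsilon$, obtained from the finite VC dimension of $\chi(\gH)$. Here the standard VC symmetrization argument must again be upgraded to dependent data, and it is the ghost-sample step that forces the stronger factor $1/(1-2\alpha_{\log})$; this produces the second term and, crucially, is the source of the hypothesis $\alpha_{\log}<1/2$, since the bound becomes vacuous once $1-2\alpha_{\log}\le 0$.

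For the second (labeled) stage, the sample $S^t$ is drawn i.i.d.\ from $\mu^t$, so the analysis proceeds exactly as in the standard compatibility-based bound. Combining the two facts above, every $h$ that clears the empirical-compatibility threshold has true compatibility at least $1-t-2\epsilon$, so the relevant hypotheses lie in the compatible region counted by $\mathcal{H}_{\mu_X^t,\chi}(t+2\epsilon)[2m_{\rm l},\mu_X^t]$. Applying the labeled-data uniform convergence bound over this reduced effective class, expressed through its expected number of splits of $2m_{\rm l}$ points, gives the stated value of $m_{\rm l}$ sufficient to force $err(h)\le\epsilon$ for any $h\in\mathcal{H}_0$ of compatibility $\ge 1-t-2\epsilon$. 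Finally I would verify that $\hat{h}=\argmax_{h\in\mathcal{H}_0}\hat{\chi}(h,S_X^{(m_{\rm u})})$ meets these conditions: $f^*\in\mathcal{H}_0$ by realizability, so $\hat{\chi}(\hat{h},\cdot)\ge\hat{\chi}(f^*,\cdot)\ge 1-t-\epsilon$, whence $\chi(\hat{h},\mu_X^t)\ge 1-t-2\epsilon$ by the uniform bound, and the labeled bound then delivers $err(\hat{h})\le\epsilon$ with probability $1-\delta$.

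The main obstacle is the dependent-data uniform convergence step: transporting the VC symmetrization argument to weakly dependent observations while tracking the exact dependence on $\alpha_{\log}$. I expect to rely on the coupling and Dobrushin-type concentration machinery of \citet{Dagan2019learning} to control the relevant empirical process, and the appearance of $1-2\alpha_{\log}$ rather than $1-\alpha_{\log}$ reflects the doubled influence introduced by the ghost sample, which is precisely why weak dependence ($\alpha_{\log}<1/2$) is what makes the argument close.
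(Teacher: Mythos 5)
Your proposal is correct and follows essentially the same route as the paper's proof: the paper likewise upgrades the Balcan--Blum compatibility argument by invoking the Dobrushin-condition McDiarmid-type concentration of \citet{Dagan2019learning} (yielding the $1/(1-\alpha_{\log})$ term) together with their Gaussian-complexity bound on the expected supremum (yielding the $1/(1-2\alpha_{\log})$ term and the $\alpha_{\log}<1/2$ requirement), and then runs the standard labeled-data argument over the pruned class $\mathcal{H}_{\mu_X^t,\chi}(t+2\epsilon)$. The only cosmetic difference is that the paper packages both dependence factors into a single uniform deviation lemma rather than splitting them into a pointwise bound at $f^*$ and a separate uniform bound, but the resulting sample-complexity terms are identical.
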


The proof of Theorem~\ref{thm:sample} is presented in Appendix~\ref{Asec:proof}, which mainly follows the recent result in the problem of learning from dependent data \citep{Dagan2019learning}. 

Theorem~\ref{thm:sample} shows that when the dependency among the unlabeled data is weak (i.e., $\alpha_{\log}(\mX^{t,m_{\rm u}})<1/2$), we can obtain a similar result compared to the classical result in the SSL theory \citep{Balcan2010adis}. Namely, if we can generate data that are highly compatible with $f^*$, which means that $t$ is very small and thus $\mathcal{H}_{\mu_X^t,\chi}(t+2\epsilon)[2m_{\rm l},\mu_X^t]$ is small, thus we do not need a lot of labeled data drawn from the target domain to learn a good $f^t$ \citep{chi2021tohan}.

\textbf{Diversity Matters in FHA.} Theorem~\ref{thm:sample} also shows that the diversity of unlabeled data matters in the FHA problem. There are two reasons. The first reason is that Theorem~\ref{thm:sample} might not be true if there is strong dependency among the unlabeled data (e.g., $\alpha_{\log}(\mX^{t,m_{\rm u}})>1/2$). This will directly make the previous work lose the theoretical foundation to address the FHA problem. The second reason is that we need more unlabeled data to reach the same error $\epsilon$ if the dependency among the unlabeled data increases. Specifically, if $\alpha_{\log}(\mX^{t,m_{\rm u}})$ is very close to $1/2$, then $m_{\rm u}$ could be a very large number.
The above reasons motivate us to take care of the dependency on the generated data. To weaken such dependency, we propose our method, 
 \emph{diversity-enhancing generative network} (DEG-Net) for the FHA problem.

\section{Diversity-enhancing Generative Network for FHA Problem}
\label{sec:method}
In this section, we propose the \emph{diversity-enhancing generative network} (DEG-Net) for the FHA problem. DEG-Net has two modules: (1) the generation module to generate diverse data from the intermediate domain ; (2) the adaptation module to train the classifier for the target domain using the generated data and labeled target data.

\begin{figure*}
\subfigure[Generation module]{
  \includegraphics[width=0.5\linewidth]{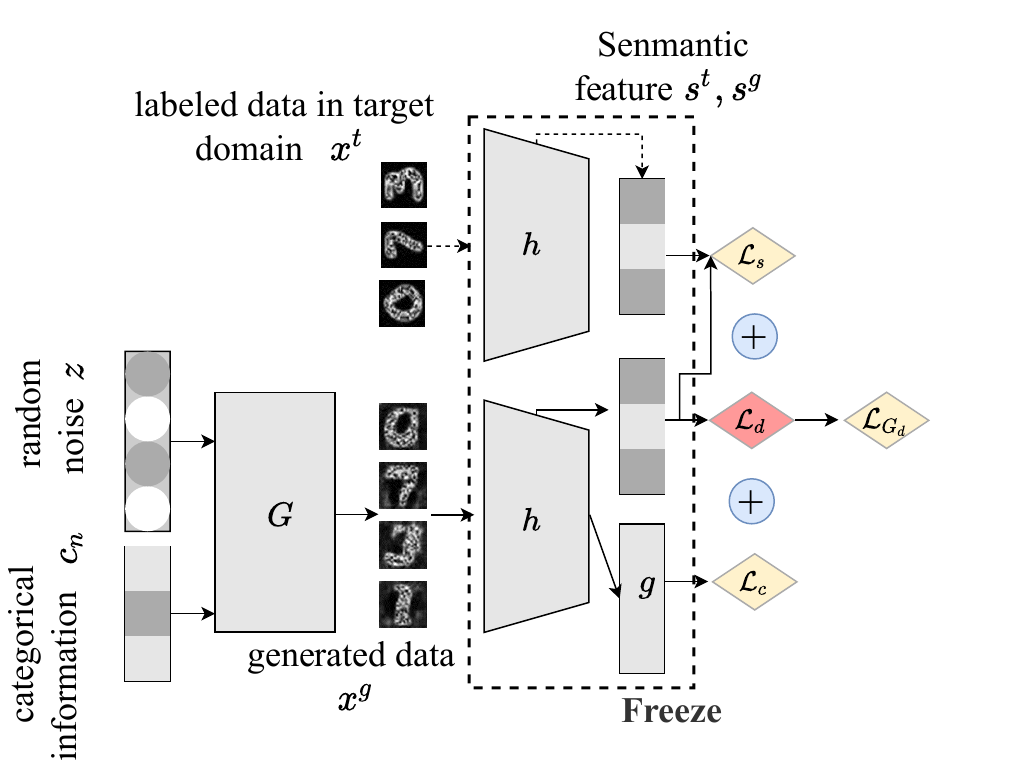}
  }
\subfigure[Adaptation module]{
  \includegraphics[width=0.45\linewidth]{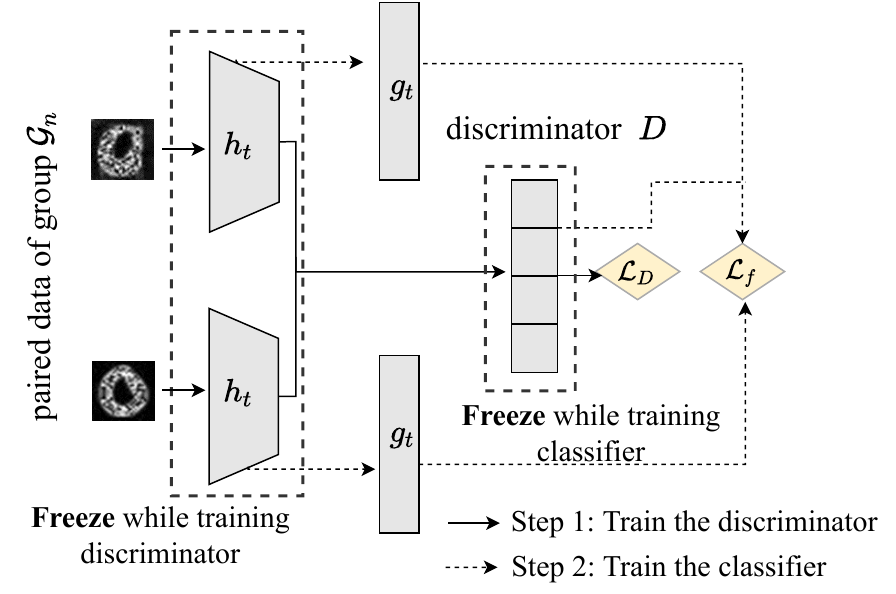}
}
\vspace{-1em}
\caption{Overview of the \emph{diversity-enhancing generative network} (DEG-Net). 
It consists of generator $G$, a classifier $f^t=h_t\circ g_t$ (initialize $f^s=f^t$) and group discriminator $D$. (a) In the generation module, we train a generator $G$ with  classifier $f^t=f^s$ frozen. (b) In the adaptation module, we first pair the generated data and labeled data and use the paired data to train the discriminator $D$ while freezing the encoder $h_t$. Then, we freeze the discriminator $D$ to train the classifier $f^t$.}
\label{fig:method}
\vspace{-0.5em}
\end{figure*}

\label{method}
\subsection{Diversity-enhancing Generation}
In order to overcome the shortcomings of the current generative method for the FHA problem, we come up with solutions for both the generative architecture and the loss function. As for the generative architecture, we propose the weight-shared conditional generative network for generating the data with the specific category. We also design a novel loss function to constrain the similarities and diversity of the semantic information of the generated data.

\textbf{Weight-shared Conditional Generative Network.}
As discussed before, for using generalized features which are shared among different classes to improve the quality of generated data and reduce the time of training, the weight-shared conditional generative network is promising. Following \citet{chi2021tohan}, the generator aims to generate data of specific categories from Gaussian random noise. The encoder outputs the semantic feature and the class probability distribution of the generated data. To achieve the aim of the generative method, we design the two-part loss functions: the classification loss function and the similarity of the semantic feature loss function.

We assume that $\bm{x}_{i,n}^\mathrm{g}=G(z^i,c_n)$ is the generated data with a specific category $n$, where the inputs of generator $G$ are the Gaussian random noise $z^i$ and the specific categorical information $c_n$. Specifically, we use the one-hot encoded label as the categorical information. The generated data $\bm{x}_{i,n}^\mathrm{g}$ inputs to the well-trained source-domain classifier $f_s=h_s\circ g_s$, where the output of $h_s$ is the group discriminator feature, which will be used in the adaptation module and the output of $g_s$ is probability feature $\bm{p_i}=(p_{i,1},\ldots.p_{i,n},\ldots,p_{i,K})$, where $p_{i,n}$ is the probability of the generated data belonging to the $n^{\mathrm{th}}$ class respectively. The semantic feature $\bm{s}_{i,n}^\mathrm{g}$ used in the similarity loss and diversity loss is the hidden-layer output of $h_s$ (details of the hidden-layer selection can be found in Appendix~\ref{Asec:exp}.). We aim to update the parameters of the generator to force the generated data with the categorical information $\bm{x}_{i,n}^\mathrm{g}$ belonging to the $n^{\mathrm{th}}$ class, i.e., making $p_{i,n}$ close to 1. Specifically, we  minimize the following loss to generate the data of a specific category $n$:
\begin{align}
\label{equ:loss_class}
\mathcal{L}_{c}=\frac{1}{K}\sum_{n=1}^K\frac{1}{B_n}\sum_{i=1}^{B_n}\left\|p_{i,n}-1\right\|_{2}^{2},
\end{align}
where $B_n$ is the batch size of the generator.

\begin{algorithm}[!t]
\footnotesize
\caption{Diversity-enhancing Generative Network (DEG-Net)}\label{alg:Uinterface}
{\bf Input:} conditional generator $G_{\theta_G}$ parameterized by $\theta_G$, a group discriminator $D_{\theta_D}$ parameterized by $\theta_D$, a classifier $f_{\theta_f}$ parameterized by $\theta_f$, kernel function $k$, generation batch size $B_n$, learning rate $\alpha_G$, $\alpha_D$ and $\alpha_f$, total epoch $T_{max}$, pretraining group discriminator epoch $T_d$.

\For{$t=1,2,.....,T_{max}$}{
    \For{$n=0,1,...,K-1$}{
         {\bfseries 1: Generate} random noise $z$ and categorical information $c_n$; \\
         {\bfseries 2: Generate} data $G_n(z)$ and then  {\bfseries add} them to $\mathcal{D}_m$;\\
        {\bfseries 3: Calculate} the semantic feature $\bm{s}$ and classification probability $\bm{p}$;\\
        {\bfseries 4: Calculate} the kernel matrice of semantic feature $S^n$ using kernel function $k$;
         }
     {\bfseries 5: Update} $\theta_{G}\gets\theta_{G}-\alpha_G\nabla\mathcal{L}_{G_d}(s,p)$ using Eq.~(\ref{equ:loss_gd});\\
     \If{$t=T_{max}-T_f$}{
     \For{$i=1,2,...,T_d$}{
        {\bfseries 6: Sample} $\mathcal{G}_1$,$\mathcal{G}_3$ from $\mathcal{D}_m \times \mathcal{D}_m$;\\
        {\bfseries 7: Sample} $\mathcal{G}_2$,$\mathcal{G}_4$ from $\mathcal{D}_m \times \mathcal{D}_t$;\\
        {\bfseries 8: Update} $\theta_{D}\gets\theta_{D}-\alpha_D\nabla\mathcal{L}_{D}(\{\mathcal{G}^4_{i=1}\})$ using Eq.~(\ref{equ:loss_discriminator});
     }
     }
     \If{$t\ge t_{max}-T_f$}{
        {\bfseries 9: Sample} $\mathcal{G}_1$,$\mathcal{G}_3$ from $\mathcal{D}_m \times \mathcal{D}_m$;\\
        {\bfseries 10: Sample} $\mathcal{G}_2$,$\mathcal{G}_4$ from $\mathcal{D}_m \times \mathcal{D}_t$;\\
        {\bfseries 11: Update} $\theta_{f}\gets\theta_{f}-\alpha_f\nabla\mathcal{L}_{f}(\{\mathcal{G}^4_{i=1}\},x)$ using Eq.~(\ref{equ:loss_f});\\
        {\bfseries 12: Update} $\theta_{D}\gets\theta_{D}-\alpha_D\nabla\mathcal{L}_{D}(\{\mathcal{G}^4_{i=1}\})$ using Eq.~(\ref{equ:loss_discriminator});
     }
}
{\bf Output:} a well-trained classifier $f _{\theta}$.
\end{algorithm}

To make the generated data closer to data in the target domain, we need to define the loss function to measure the difference between data of two different domains. 
Motivated by \citet{zheng2021exploiting}, DEG-Net uses semantic features to calculate the similarities. To avoid the copy issue, we decided to use the $\ell_1$ distance $\left\| \bm{x}-\bm{y} \right\|_1=\sum_i\omega_i|x_i-y_i|$, where $\omega_i=|x_i-y_i|^2/\left\| \bm{x}-\bm{y} \right\|_2$, since it encourages larger gradients for feature dimensions with higher residual errors. Compared to the $\ell_2$-norm, it is better to measure the similarity of the semantic features between the generated images and the target images, since $\ell_1$ distance is more robust to outliers \citep{oneto2020exploiting}. Thus, the similarity loss is defined as follows: 
\begin{align}
\label{equ:loss_similarity}
\mathcal{L}_{s}=\frac{1}{K}\sum_{n=1}^K\frac{1}{m_{\rm l}MB_n}\sum_{i=1}^{B_n}\sum_{j=1}^{m_{\rm l}}\left\|\bm{s}_{i,n}^\mathrm{g}-\bm{s}_j^t\right\|_1,
\end{align}
where $M= \max_{\bm{s}_1,\bm{s}_2 \in \mathcal{X}} \left\|\bm{s}_1-\bm{s}_2\right\|_1$ ($\mathcal{X}$ is compact and $\left\| \cdot \right\|_1$ is continuous), $m_{\rm l}$ is the number of labeled data drawn from the target domain, $\bm{s}_j^{\mathrm{t}}$ and $\bm{s}_{i,n}^\mathrm{g}$ are the semantic features of target data and generated data, respectively. Combining Eq.~(\ref{equ:loss_class}) and Eq.~(\ref{equ:loss_similarity}), we obtain the loss to train the weight-shared conditional generative network:
\begin{align}
\label{equ:loss_sc}
\mathcal{L}_{G}=\mathcal{L}_{c}+\lambda \mathcal{L}_{s},
\end{align}
where $\lambda \ge0$ is a hyper-parameter between two losses. Note that optimizing Eq.~(\ref{equ:loss_sc}) is corresponding to generative method's principle Eq.~(\ref{equ:inter_domain}) for the FHA problem , where Eq.~(\ref{equ:loss_class}) (resp. Eq.~(\ref{equ:loss_similarity})) is corresponding to $\chi(h^s,\mu)$ (resp. $\chi(h^t,\mu)$). To ensure that the conditional generator can generate the image with the correct class label, we pretrain the generator for some epochs.

\textbf{Generative Function with Diversity.}
As discussed above, the weak dependence among unlabeled data is an important condition for using generated unlabeled data to address the FHA problem. To ensure that the unlabeled data are weakly dependent among unlabeled data (i.e., to generate more diverse unlabeled data), it is necessary to use diversity regularization to train the generator. Unfortunately, the log-coefficient score, a dependence measure used to analyze the sample complexity, is hard to calculate, since its calculation requires the unknown distribution regarding the target-domain data. HSIC, a kernel independence measure can also measure the dependency of the generated data. Different from the log-coefficient score, HSIC can be easily estimated \citep{gretton2005measuring,song2012feature}:
\begin{align}
\label{equ:HSIV_e}
\widehat{\rm{HSIC}}(X,Y)=\frac{1}{(N-1)^2}\mathrm{Tr}(KHLH),
\end{align}
where $K=(k_{ij})=k(x_i,x_j)$ ($L=(l_{ij})=k(y_i,y_j)$) is the kernel matrix ($k(\cdot,\cdot)$ is the kernel function) and $H=I-\frac{1}{N}\bm{1}\bm{1}^{\top}$ is the centering matrix.
We minimize the HSIC measure of the generated data's semantic features to obtain weakly dependent data. Specifically, we use the Gaussian kernel as the kernel function ,and minimize the following loss to generate more diverse data:
\begin{equation}
\begin{aligned}
\label{equ:loss_diversity}
\mathcal{L}_{d}&=\frac{1}{K}\sum_{n=1}^K\sqrt{\widehat{\rm{HSIC}}( \bm{s}_n^{\mathrm{g}},\bm{s}_n^{\mathrm{g}})} \\
&=\frac{1}{K}\sum_{n=1}^K \sqrt{\frac{1}{(B_n-1)^2}\mathrm{Tr}(S^{n}HS^{n}H)},
\end{aligned}
\end{equation}
where $S^n=(\bm{s}^n_{ij})=k(\bm{s}^{\mathrm{g}}_{i,n},\bm{s}^{\mathrm{g}}_{j,n})$ is the kernel matrice of the semantic features of the generated data with a specific class.
Hence, we obtain the total loss to train the generator with diversity enhancement:
\begin{align}
\label{equ:loss_gd}
\mathcal{L}_{G_d}&=\mathcal{L}_{G}+\beta \mathcal{L}_{d},
\end{align}
where  $\beta\ge0$ is a hyper-parameter between the generative loss and diversity regularization. 
Note that the diversity regularization and the similarity loss restrict themselves.

\begin{table*}[!t]
  \centering
    \footnotesize
    \caption{Classification accuracy$ \pm$standard deviation (\%) on 6 digits FHA tasks. \textbf{Bold} value represents the highest accuracy on each column.}
     \label{tab:digital}
     \vspace{1mm}
    \begin{tabular}{cccccccccc}
    \toprule
    \multirow{2}[4]{*}{Tasks} & \multirow{2}[4]{*}{WA} & \multirow{2}[4]{*}{FHA Methods} & \multicolumn{7}{c}{Number of Target Data per Class} \\
\cmidrule{4-10}          &       &       & 1     & 2     & 3     & 4     & 5     & 6     & 7 \\
    \midrule
    \multirow{6}[2]{*}{$M\to U$} & \multirow{6}[2]{*}{69.7} & FT    & 74.4$\pm$0.7 & 76.7$\pm$1.9 & 76.9$\pm$2.2 & 77.3$\pm$1.1 & 77.6$\pm$1.4 & 78.3$\pm$2.1 & 78.3$\pm$1.6  \\
          &       & SHOT  & 87.2$\pm$0.2 & 87.9$\pm$0.3 & 87.8$\pm$0.4  & 88.0$\pm$0.4 & 87.9$\pm$0.5 & 88.0$\pm$0.3 & 88.4$\pm$0.3  \\
          &       & S+FADA & 83.7$\pm$0.9 & 86.0$\pm$0.4 & 86.1$\pm$1.1 & 86.5$\pm$0.8 & 86.8$\pm$1.4 & 87.0$\pm$0.6 & 87.2$\pm$0.8 \\
          &       & T+FADA & 84.2$\pm$0.1 & 84.2$\pm$0.3 & 85.2$\pm$0.9 & 85.2$\pm$0.6 & 86.0$\pm$1.5 & 86.8$\pm$1.5 & 87.2$\pm$0.5\\
          &       & TOHAN & \textbf{87.7$\pm$0.7} & \textbf{88.3$\pm$0.5} & \textbf{88.5$\pm$1.2} & \textbf{89.3$\pm$0.9} & 89.4$\pm$0.8 & 90.0$\pm$1.0 & 90.4$\pm$1.2 \\
          &       & DEG-Net & 83.1$\pm$0.9 & 86.2$\pm$0.8 & 86.5$\pm$0.6 & 88.7$\pm$0.9 & \textbf{89.6$\pm$0.5} & \textbf{91.5$\pm$0.6} & \textbf{92.1$\pm$0.6} \\
    \midrule
    \multirow{6}[2]{*}{$M\to S$} & \multirow{6}[2]{*}{24.1} & FT    & 26.7$\pm$1.0 & 26.8$\pm$2.1 & 26.8$\pm$1.6 & 27.0$\pm$0.7 & 27.3$\pm$1.2 & 27.5$\pm$0.8 & 28.3$\pm$1.5 \\
          &       & SHOT  & 25.7$\pm$2.2 & 26.9$\pm$1.2 & 27.9$\pm$2.6 & 29.1$\pm$0.4 & 29.1$\pm$1.4 & 29.6$\pm$1.7 & 29.8$\pm$1.5 \\
          &       & S+FADA & 25.6$\pm$1.3 & 27.7$\pm$0.5 & 27.8$\pm$0.7 & 28.2$\pm$1.3 & 28.4$\pm$1.4 & 29.0$\pm$1.0 & 29.6$\pm$1.9 \\
          &       & T+FADA & 25.3$\pm$1.0 & 26.3$\pm$0.8 & 28.9$\pm$1.0 & 29.1$\pm$1.3 & 29.2$\pm$1.3 & 31.9$\pm$0.4 & 32.4$\pm$1.8 \\
          &       & TOHAN & 26.7$\pm$0.1 & \textbf{ 28.6$\pm$1.1} &  29.5$\pm$1.4 &  29.6$\pm$0.4  & 30.5$\pm$1.2  & 32.1$\pm$0.2  & 33.2$\pm$0.8 \\
          &       & DEG-Net & \textbf{27.2$\pm$0.3} & 28.5$\pm$1.3 & \textbf{29.7$\pm$0.9} & \textbf{30.7$\pm$0.8} & \textbf{32.9$\pm$1.5} & \textbf{33.7$\pm$1.8} & \textbf{34.9$\pm$1.6} \\
    \midrule
    \multirow{6}[2]{*}{$S \to U$} & \multirow{6}[2]{*}{64.3} & FT    & 64.9$\pm$1.1 & 66.5$\pm$1.5 & 66.7$\pm$1.7 & 67.3$\pm$1.1 & 68.1$\pm$2.3 & 68.3$\pm$0.5 & 69.7$\pm$1.4 \\
          &       & SHOT  & 74.7$\pm$0.3 & 75.5$\pm$1.4 & 75.6$\pm$1.0 & 75.8$\pm$0.7 & 77.1$\pm$2.1 & 77.8$\pm$1.6 & 79.6$\pm$0.6 \\
          &       & S+FADA & 72.2$\pm$1.4 & 73.6$\pm$1.4 & 74.7$\pm$1.4 & 76.2$\pm$1.3 & 77.2$\pm$1.7 & 77.8$\pm$3.0 & 79.7$\pm$1.9 \\
          &       & T+FADA & 71.7$\pm$0.6 & 74.3$\pm$1.9 & 74.5$\pm$0.8 & 75.9$\pm$2.1 & 77.7$\pm$1.5 & 76.8$\pm$1.8 & 79.7$\pm$1.9 \\
          &       & TOHAN & \textbf{75.8$\pm$0.9} & \textbf{76.8$\pm$1.2} & \textbf{79.4$\pm$0.9} & 80.2$\pm$0.6 & 80.5$\pm$1.4 & 81.1$\pm$1.1 & 82.6$\pm$1.9 \\
          &       & DEG-Net & 75.2$\pm$0.3 & 76.9$\pm$1.5 & 78.2$\pm$1.2 & \textbf{80.7$\pm$1.5} & \textbf{81.7$\pm$1.7} & \textbf{83.1$\pm$1.7} & \textbf{84.3$\pm$2.2} \\
    \midrule
    \multirow{6}[2]{*}{$S\to M$} & \multirow{6}[2]{*}{70.2} & FT    & 70.2$\pm$0.0 & 70.6$\pm$0.3 & 70.7$\pm$0.1 & 70.8$\pm$0.3 & 70.9$\pm$0.2 & 71.1$\pm$0.3 & 71.1$\pm$0.4 \\
          &       & SHOT  & 72.6$\pm$1.9 & 73.6$\pm$2.0 & 74.1$\pm$0.6 & 74.6$\pm$1.2 & 74.9$\pm$0.7 & 75.4$\pm$0.3 & 76.1$\pm$1.5 \\
          &       & S+FADA & 74.4$\pm$1.5 & 83.1$\pm$0.7 & 83.3$\pm$1.1 & 85.9$\pm$0.5 & 86.0$\pm$1.2 & 87.6$\pm$2.6 & 89.1$\pm$1.0 \\
          &       & T+FADA & 74.2$\pm$1.8 & 81.6$\pm$4.0 & 83.4$\pm$0.8 & 82.0$\pm$2.3 & 86.2$\pm$0.7 & 87.2$\pm$0.8 & 88.2$\pm$0.6 \\
          &       & TOHAN & \textbf{76.0$\pm$1.9} & \textbf{83.3$\pm$0.3} & 84.2$\pm$0.4 & \textbf{86.5$\pm$1.1} & 87.1$\pm$1.3 & 88.0$\pm$0.5 & 89.7$\pm$0.5 \\
          &       & DEG-Net & 76.2$\pm$1.3 & 78.2$\pm$1.3 & \textbf{85.7$\pm$0.6} & 85.9$\pm$0.8 & \textbf{88.6$\pm$1.6} & \textbf{89.5$\pm$1.2} & \textbf{90.2$\pm$0.7} \\
    \midrule
    \multirow{6}[2]{*}{$U\to M$} & \multirow{6}[2]{*}{82.9} & FT    & 83.5$\pm$0.4 & 84.3$\pm$2.4 & 84.5$\pm$0.7 & 85.5$\pm$1.3 & 86.6$\pm$1.0 & 87.2$\pm$0.7 & 88.1$\pm$2.7 \\
          &       & SHOT  & 83.1$\pm$0.5 & 85.5$\pm$0.3 & 85.8$\pm$0.6 & 86.0$\pm$0.2 & 86.6$\pm$0.2 & 86.7$\pm$0.2 & 87.0$\pm$0.1 \\
          &       & S+FADA & 83.2$\pm$0.2 & 84.0$\pm$0.3 & 85.0$\pm$1.2 & 85.6$\pm$0.5 & 85.7$\pm$0.6 & 86.2$\pm$0.6 & 87.2$\pm$1.1 \\
          &       & T+FADA & 82.9$\pm$0.7 & 83.9$\pm$0.2 & 84.7$\pm$0.8 & 85.4$\pm$0.6 & 85.6$\pm$0.7 & 86.3$\pm$0.9 & 86.6$\pm$0.7 \\
          &       & TOHAN & \textbf{84.0$\pm$0.5} & 85.2$\pm$0.3 & 85.6$\pm$0.7 & 86.5$\pm$0.5 & 87.3$\pm$0.6 & 88.2$\pm$0.7 & 89.2$\pm$0.5 \\
          &       & DEG-Net & 82.2$\pm$0.7 & \textbf{85.9$\pm$0.6} & \textbf{86.5$\pm$1.5} & \textbf{87.8$\pm$0.9} & \textbf{88.9$\pm$0.9} & \textbf{90.3$\pm$0.5} & \textbf{91.6$\pm$1.2} \\
    \midrule
    \multirow{6}[2]{*}{$U\to S$} & \multirow{6}[2]{*}{17.3} & FT    & 23.4$\pm$1.8 & 23.6$\pm$2.7 & 23.8$\pm$1.6 & 24.6$\pm$1.4 & 24.6$\pm$1.2 & 24.8$\pm$0.7 & 25.5$\pm$1.8 \\
          &       & SHOT  & \textbf{30.3$\pm$1.2} & \textbf{31.6$\pm$0.4} & 29.8$\pm$0.5 & 29.4$\pm$0.3 & 29.7$\pm$0.5 & 29.8$\pm$0.8 & 30.1$\pm$0.9 \\
          &       & S+FADA & 28.1$\pm$1.2 & 28.7$\pm$1.3 & 29.0$\pm$1.2 & 30.1$\pm$1.1 & 30.3$\pm$1.3 & 30.7$\pm$1.0 & 30.9$\pm$1.5 \\
          &       & T+FADA & 27.5$\pm$1.4 & 27.9$\pm$0.9 & 28.4$\pm$1.3 & 29.4$\pm$1.8 & 29.5$\pm$0.7 & 30.2$\pm$1.0 & 30.4$\pm$1.7 \\
          &       & TOHAN & 29.9$\pm$1.2 & 30.5$\pm$1.2 & 31.4$\pm$1.1 & 32.8$\pm$0.9 & 33.1$\pm$1.0 & 34.0$\pm$1.0 & 35.1$\pm$1.8 \\
          &       & DEG-Net & 29.1$\pm$1.3 & 30.7$\pm$1.1 & \textbf{31.8$\pm$0.7} & \textbf{33.0$\pm$1.6} & \textbf{33.5$\pm$1.4} & \textbf{35.1$\pm$1.3} & \textbf{36.2$\pm$1.2} \\
    \bottomrule
    \end{tabular}%
\end{table*}%

\subsection{Adaptation Module Using Generated Data}
Following \citet{chi2021tohan}, we create paired data using the labeled data in the target domain and the generated data and assign the group labels to the paired groups under the following rules: $\mathcal{G}_1$ pairs the generated data with the same class label; $\mathcal{G}_2$ pairs the generated data and the data in the target domain with the same class label; $\mathcal{G}_3$ pairs the generated data but with different class label; $\mathcal{G}_4$ pairs the generated data and the data in the target domain and also with different class label. By using adversarial learning, we train a discriminator $D$ which could distinguish between the data in different domains while maintaining high classification accuracy on generated data. The discriminator $D$ is a four-class      classifier with the inputs of the above paired group data. Different from classical adversarial domain adaptation \citep{ganin2016domain,jiang2020bidirectional}, the group discriminator $D$ decides which of
the four groups a given data pair belongs to.
By freezing the encoder, we train $D$  with the cross-entropy loss:
\begin{equation}
\label{equ:loss_discriminator}
\mathcal{L}_{D}=-\hat{\mathbb{E}}\left[\sum_{i=1}^{4}y_{\mathcal{G}_i}\log(D(\phi(\mathcal{G}_i)))\right],
\end{equation}
where $\hat{\mathbb{E}}(\cdot)$ represents the empirical mean value, $y_{\mathcal{G}_i}$ is the group label of the group $\mathcal{G}_i$ and $\phi(\mathcal{G}_i):=[g(x_1),g(x_2)]$  is the output of the encoder with the paired data input.

Next, we will train the classifier $f^t=h_t\circ g_t$ while freezing the group discriminator, which is initialized with the same weight as that in the source classifier $f^s=h_s\circ g_s$. Motivated by non-saturating games \citep{goodfellow2016nips}, we minimize the following loss to update $f^t$:
\begin{align}
\label{equ:loss_f}
\mathcal{L}_{f}=&-\gamma \hat{\mathbb{E}}\left[y_{\mathcal{G}_{1}} \log \left(D\left(\phi\left(\mathcal{G}_{2}\right)\right)\right)  -y_{\mathcal{G}_{3}} \log\left(D\left(\phi\left(\mathcal{G}_{4}\right)\right)\right)\right] \nonumber
\\ &+\hat{\mathbb{E}}\left[\ell\left(f_{t}\left(x_{t}\right), f_{t}^{*}\left(x_{t}\right)\right)\right],
\end{align}
where $\gamma\ge0$ is a hyper-parameter, $l$ is the cross-entropy loss, and $f^*_t$ is the optimal target model. As demonstrated in Theorem~\ref{thm:sample}, it is only necessary to use generated unlabeled data for addressing the FHA problem. Thus, we only use labeled target data for target supervised loss in Eq.~(\ref{equ:loss_f}).

\begin{table*}[!t]
  \centering
   \footnotesize
  \caption{Classification accuracy$ \pm$standard deviation (\%) on 2 objects FHA tasks. \textbf{Bold} value represents the highest accuracy on each row.}\label{tab:object}
  \vspace{1mm}
    \begin{tabular}{p{9.25em}ccccccp{4.19em}}
    \toprule
    \multicolumn{1}{c}{\multirow{2}[4]{*}{Tasks}} & \multicolumn{7}{c}{FHA Methods} \\
\cmidrule{2-8}    \multicolumn{1}{c}{} & WA    & FT    & SHOT  & S+FADA & T+FADA & TOHAN & \multicolumn{1}{c}{DEG-Net} \\
    \midrule
    $CF \to SL$ & 70.6  & 71.5$\pm$1.0 & 71.9$\pm$0.4 & 72.1$\pm$0.4 & 71.3$\pm$0.5 & 72.8$\pm$0.1 & \textbf{74.3$\pm$0.3} \\
    \midrule
    $SL \to CF$ & 51.8  & 54.3$\pm$0.5 & 53.9$\pm$0.2 & 56.9$\pm$0.5 & 55.8$\pm$0.8 & 56.6$\pm$0.3 & \textbf{57.2$\pm$0.5} \\
    \bottomrule
    \end{tabular}%
\end{table*}%

\begin{table}[!t]
  \centering
  \footnotesize
  \caption{Classification accuracy on VisDA-C dataset. \textbf{Bold} value represents the highest accuracy on each row.}\label{tab:Visda}
  \vspace{1mm}
\begin{tabular}{cccc}
\toprule
\multicolumn{1}{c}{\multirow{2}{*}{FHA Methods}} & \multicolumn{3}{l}{Number of Targe Data per Class} \\
\cmidrule{2-4}
\multicolumn{1}{c}{}                        & 8               & 9               & 10             \\
\midrule
SHOT                                        & 62.07           & 62.28           & 62.36          \\
TOHAN                                       & 64.75           & 64.98           & 64.83          \\
DFG-Net                                     & \textbf{65.36}          & \textbf{65.55}          & \textbf{66.36} \\
\bottomrule
\end{tabular}
\vspace{-1em}
\end{table}%

\begin{table*}[!t]
  \centering
  \footnotesize
  \caption{Ablation study.  Classification accuracy$\pm$standard deviation(\%) on $M\to U$. \textbf{Bold} value represents the highest accuracy on each column.}\label{tab:ablation}
  \vspace{1mm}
    \begin{tabular}{cccccccc}
    \toprule
     \multirow{2}[4]{*}{FHA Methods} & \multicolumn{7}{c}{Number of Target Data per Class} \\
\cmidrule{2-8}         & 1     & 2     & 3     & 4     & 5     & 6     & 7 \\
    \midrule
     TOHAN & 76.0$\pm$1.9 & 83.3$\pm$0.3 & 84.2$\pm$0.4 & 86.5$\pm$1.1 & 87.1$\pm$1.3 & 88.0$\pm$0.5 & 89.7$\pm$0.5 \\
           separate generative DEG-Net & \multicolumn{1}{p{4.19em}}{75.7$\pm$0.7} & \multicolumn{1}{p{4.19em}}{84.7$\pm$0.5} & \multicolumn{1}{p{4.19em}}{85.0$\pm$1.2} & \multicolumn{1}{p{4.19em}}{85.9$\pm$0.9} & \multicolumn{1}{p{4.19em}}{87.4$\pm$0.8} & \multicolumn{1}{p{4.19em}}{89.1$\pm$1.0} & \multicolumn{1}{p{4.19em}}{90.4 $\pm$1.2} \\
           DEG-Net w/o diversity & 87.2$\pm$1.9 & \textbf{89.5$\pm$0.3} & 89.2$\pm$0.4 & 90.2$\pm$1.1 & 90.3$\pm$1.3 & 91.1$\pm$0.5 & 91.2$\pm$0.5 \\
           DEG-Net & \textbf{87.3$\pm$0.9} & 89.2$\pm$0.8 & \textbf{90.1$\pm$0.6} & \textbf{90.8$\pm$0.9} & \textbf{90.6$\pm$0.5} & \textbf{91.5$\pm$0.6} & \textbf{92.1$\pm$0.6} \\
    \bottomrule
    \end{tabular}%
\end{table*}%

\section{Experiments}

\label{sec:exp}
We compare DEG-Net with previous FHA methods on digits datasets (i.e. MNIST ($M$), USPS ($U$), and SVHN ($S$)) and objects datasets (i.e. CIFAR-10 ($CF$) , STL-10 ($SL$) and VisDA-C), following \citep{chi2021tohan}. Following the standard domain-adaptation protocols \citep{shu2018dirt}, we compare DEG-Net with 4 baselines: (1) \emph{Without adaptation} (WA); (2) \emph{Fine tuning} (FT); (3) \emph{SHOT} \citep{liang2020we}; (4) \emph{S+FADA} \citep{chi2021tohan}; (5) \emph{T+FADA} \citep{chi2021tohan}; and (6) \emph{TOHAN} \citep{chi2021tohan}. Details regarding datasets are in Appendixe~\ref{Asec:data} and details regarding baselines and implementation are in  and Appendixe~\ref{Asec:exp}.




\textbf{Digits Datasets.}
Following \citet{chi2021tohan} and \citet{motiian2017few}, We conduct 6 tasks of the adaptation among the 3 digital datasets and choose the number of target data from 1 to 7 per class. The classifier accuracy on the target domain of our method over 6 tasks is shown in Table~\ref{tab:digital}. The results show that the performance of DEG-Net is the best on almost all the tasks. It is clear that the accuracy of DEG-Net is lower than TOHAN when the amount of target data is too small. The diversity regularization and the similarity loss restrict each other, to avoid the copy issue. However, when the amount of target data is too small, the target domain information is a few, so the generator is less likely to generate similar data with the target domain. Diversity loss enhances this adversarial effect, resulting DEG-Net degrading to TOHAN and SHOT. 
Another improvement of DEG-Net over TOHAN is the faster training process of the generator. We need 0.93s to complete the training within each epoch in DEG-Net while needing 1.35s in TOHAN.


\textbf{Objects Datasets.}
Following \citet{chi2021tohan}, we examine the performance of DEG-Net on 2 object tasks and choose the number of target data as 10 per class. The classification accuracies on object tasks are shown in Table~\ref{tab:object}.Our methods outperform baselines. In $CF\to SL$, we achieve a 1.5\% improvement over TOHAN. In $SL\to CF$, we achieve a performance accuracy of 57.2\%, 0.3\% improvement over S+FADA. The effect of DEG-Net is not obvious in objective tasks. The possible reason may be that the simple structure of generative networks is difficult to generate diverse and correct data since the complexity of datasets 

For estimating our method on larger datasets, we conduct the experiment on the VisDA-C \citep{peng2017visda}, a commonly used large-scale domain adaptation benchmark. We choose synthesis data as the source domain data and real data as the target domain data and choose the number of target data from 8 to 10 per class. The classifier accuracy on the target domain of our method is shown in Table~\ref{tab:Visda}. DEG-Net achieves the highest accuracy over the baselines. We find that the accuracy of the generative-based methods (DEG-Net and TOHAN) is higher than SHOT. The reason may be that the generative-based methods can provide a similar amount of valid generated data per class for adaptation learning. Our method performs well than TOHAN, which can demonstrate that diversity matters in the generative-based methods for addressing the FHA problem. In addition, it should be noted that TOHAN trains different generators per class, leading to consuming more memory resources.

\textbf{DEG-Net Generates More Diverse Data Than TOHAN.} 
In this part, we analyze the diversity of the generated data by DEG-Net and TOHAN to see if our generation process can produce more diverse data than TOHAN's. We choose the square root of the HSIC to measure the diversity of the generated data in the task $M\to S$, and calculate the HSIC value among the target-domain data as a reference value that is $0.0013$. After the calculation, the average diversity measure of DEG-Net is $0.0019$, and the average diversity measure of TOHAN is $0.0027$. DEG-Net can generate more diverse data than TOHAN. The detailed diversity analysis can be found in Appendix \ref{Asec:Analysis}.

\textbf{Ablation Study.} 
To show the advantage of the weight-shared architecture and the diversity loss, we conduct two experiments: (1) The architecture of weight-shared is the same as the DEG-Net but uses Eq.~(\ref{equ:loss_sc}) to train the generator (DEG-Net without diversity). (2) The separate generative method, which is similar to TOHAN, has $K$ generators and uses the semantic features to calculate the similarity loss $\mathcal{L}_{G_s}^n$ for training each generator:
\begin{equation}
\label{equ:loss_s}
\setlength{\abovedisplayskip}{0pt}
\setlength{\belowdisplayskip}{0pt}
\frac{1}{B_n}\left\|p_{n}-1\right\|_{2}^{2}+\lambda \frac{1}{N_yMB_n}\sum_{i=1}^{B_n}\sum_{j=1}^{N_y}\left\|\bm{s}_i^{gn}-\bm{s}_j^t\right\|_1.
\end{equation}

As shown in Table \ref{tab:ablation}, DEG-Net works better than both methods introduced above, and the weight-shared architecture works better than the separate generative method, which reveals that both the weight-shared architecture and the diversity loss can improve the quality of generated data and thus achieve the higher accuracy. Specifically, compared to the modified DEG-Net without the diversity loss, the separate generative method ignores the generalization knowledge in the semantic features of data which is shared with all the classes. Modified DEG-Net discards the diversity loss, and thus generates low diverse data and results in worse performance. However, the HSIC diversity loss does not work for all situations. DEG-Net achieves similar accuracy with modified DEG-Net without diversity or even worse if the amount of the labeled data is very few (i.e., $m_1 \le 2$). This phenomenon may be caused by worse data generated by the diversity method. While estimating HSIC, we need enough data to ensure the value of HSIC can be estimated well. Since the diversity loss restricting to the similarity loss, the generator is less likely to generate similar data over target domain while HSIC loss is not estimated well (i.e., the distribution of generated data is far from the target domain). 
In addition, we conduct the ablation study for comparing our method to TOHAN with the basic geometric data augmentation for the FHA problem over the digit tasks. The results show that the performance of the augmentation techniques is worse than our method in general and details of these experiments can be found in Appendix~\ref{Asec:Analysis}.

\section{Conclusion}
In this paper, we focus on generating more diverse unlabeled data for addressing the \emph{few-shot hypothesis adaptation} (FHA) problem. We experimentally and theoretically prove that the diversity of generated data (i.e., the independence among the generated data) matters in addressing the FHA problem. For addressing the FHA problem, we propose a \emph{diversity-enhancing generative network} (DEG-Net), which consists of the generation module and the adaptation module. With weight-shared conditional generative method equipped with a kernel independence measure: {Hilbert-Schmidt independence criterion}, DEG-Net can generate more diverse unlabeled data and achieve better performance. Experiments show that the generated data of DEG-Net are more diverse. Since the high diversity of the generated data, DEG-Net achieves state-of-the-art performance when addressing the FHA problem, which lights up a novel and theoretical-guaranteed road to the FHA problem in the future.

\section*{Acknowledgements}

RJD and BH were supported by NSFC Young Scientists Fund No. 62006202 and Guangdong Basic and Applied Basic Research Foundation No. 2022A1515011652, and HKBU CSD Departmental Incentive Grant. BH was also supported by RIKEN Collaborative Research Fund. MS was supported by JST CREST Grant Number JPMJCR18A2.
\nocite{langley00}

\bibliography{icml_conference}

\begin{thebibliography}{59}
\providecommand{\natexlab}[1]{#1}
\providecommand{\url}[1]{\texttt{#1}}
\expandafter\ifx\csname urlstyle\endcsname\relax
  \providecommand{\doi}[1]{doi: #1}\else
  \providecommand{\doi}{doi: \begingroup \urlstyle{rm}\Url}\fi

\bibitem[Balcan \& Blum(2010)Balcan and Blum]{Balcan2010adis}
Balcan, M. and Blum, A.
\newblock A discriminative model for semi-supervised learning.
\newblock \emph{Journal of {ACM}}, 57\penalty0 (3):\penalty0 19:1--19:46, 2010.

\bibitem[Blaschko \& Gretton(2008)Blaschko and Gretton]{blaschko2008learning}
Blaschko, M. and Gretton, A.
\newblock Learning taxonomies by dependence maximization.
\newblock In \emph{NeurIPS}, 2008.

\bibitem[Chen et~al.(2020)Chen, Wang, Tang, and Muandet]{chen2020mate}
Chen, X., Wang, Z., Tang, S., and Muandet, K.
\newblock Mate: plugging in model awareness to task embedding for meta
  learning.
\newblock In \emph{NeurIPS}, 2020.

\bibitem[Chi et~al.(2021{\natexlab{a}})Chi, Liu, Yang, Lan, Liu, Han, Cheung,
  and Kwok]{chi2021tohan}
Chi, H., Liu, F., Yang, W., Lan, L., Liu, T., Han, B., Cheung, W., and Kwok, J.
\newblock Tohan: A one-step approach towards few-shot hypothesis adaptation.
\newblock In \emph{NeurIPS}, 2021{\natexlab{a}}.

\bibitem[Chi et~al.(2021{\natexlab{b}})Chi, Liu, Yang, Lan, Liu, Han, Niu,
  Zhou, and Sugiyama]{chi2021demystifying}
Chi, H., Liu, F., Yang, W., Lan, L., Liu, T., Han, B., Niu, G., Zhou, M., and
  Sugiyama, M.
\newblock Demystifying assumptions in learning to discover novel classes.
\newblock In \emph{ICLR}, 2021{\natexlab{b}}.

\bibitem[Chi et~al.(2022)Chi, Liu, Yang, Lan, Liu, Han, Niu, Zhou, and
  Sugiyama]{Chi2022Meta}
Chi, H., Liu, F., Yang, W., Lan, L., Liu, T., Han, B., Niu, G., Zhou, M., and
  Sugiyama, M.
\newblock Meta discovery: Learning to discover novel classes given very limited
  data.
\newblock In \emph{ICLR}, 2022.

\bibitem[Coates et~al.(2011)Coates, Ng, and Lee]{coates2011analysis}
Coates, A., Ng, A., and Lee, H.
\newblock An analysis of single-layer networks in unsupervised feature
  learning.
\newblock In \emph{AISTATS}, 2011.

\bibitem[Dagan et~al.(2019)Dagan, Daskalakis, Dikkala, and
  Jayanti]{Dagan2019learning}
Dagan, Y., Daskalakis, C., Dikkala, N., and Jayanti, S.
\newblock Learning from weakly dependent data under dobrushin's condition.
\newblock In \emph{COLT}, 2019.

\bibitem[Du et~al.(2017)Du, Koushik, Singh, and P{\'o}czos]{du2017hypothesis}
Du, S.~S., Koushik, J., Singh, A., and P{\'o}czos, B.
\newblock Hypothesis transfer learning via transformation functions.
\newblock In \emph{NeurIPS}, 2017.

\bibitem[Fang et~al.(2022)Fang, Lu, Liu, and Zhang]{fang2022semi}
Fang, Z., Lu, J., Liu, F., and Zhang, G.
\newblock Semi-supervised heterogeneous domain adaptation: Theory and
  algorithms.
\newblock \emph{{IEEE} Transactions on Pattern Analysis and Machine
  Intelligence}, 45\penalty0 (1):\penalty0 1087--1105, 2022.

\bibitem[Ganin et~al.(2016)Ganin, Ustinova, Ajakan, Germain, Larochelle,
  Laviolette, Marchand, and Lempitsky]{ganin2016domain}
Ganin, Y., Ustinova, E., Ajakan, H., Germain, P., Larochelle, H., Laviolette,
  F., Marchand, M., and Lempitsky, V.
\newblock Domain-adversarial training of neural networks.
\newblock \emph{The Journal of Machine Learning Research}, 17\penalty0
  (1):\penalty0 2096--2030, 2016.

\bibitem[Goodfellow(2016)]{goodfellow2016nips}
Goodfellow, I.
\newblock {NIPS} 2016 tutorial: Generative adversarial networks.
\newblock \emph{arXiv preprint arXiv:1701.00160}, 2016.

\bibitem[Gretton et~al.(2005)Gretton, Bousquet, Smola, and
  Sch{\"o}lkopf]{gretton2005measuring}
Gretton, A., Bousquet, O., Smola, A., and Sch{\"o}lkopf, B.
\newblock Measuring statistical dependence with {Hilbert-Schmidt} norms.
\newblock In \emph{ALT}, 2005.

\bibitem[Gretton et~al.(2007)Gretton, Fukumizu, Teo, Song, Sch{\"o}lkopf, and
  Smola]{gretton2007kernel}
Gretton, A., Fukumizu, K., Teo, C., Song, L., Sch{\"o}lkopf, B., and Smola, A.
\newblock A kernel statistical test of independence.
\newblock In \emph{NeurIPS}, 2007.

\bibitem[Guo et~al.(2022)Guo, Lin, Yi, Song, Sun, Lin, Zhong, Wu, Wang, Zhang,
  et~al.]{guo2022deep}
Guo, X., Lin, F., Yi, C., Song, J., Sun, D., Lin, L., Zhong, Z., Wu, Z., Wang,
  X., Zhang, Y., et~al.
\newblock Deep transfer learning enables lesion tracing of circulating tumor
  cells.
\newblock \emph{Nature Communications}, 13\penalty0 (1):\penalty0 7687, 2022.

\bibitem[Hull(1994)]{hull1994database}
Hull, J.~J.
\newblock A database for handwritten text recognition research.
\newblock \emph{IEEE Transactions on Pattern Analysis and Machine
  Intelligence}, 16\penalty0 (5):\penalty0 550--554, 1994.

\bibitem[Jiang et~al.(2020)Jiang, Wu, Han, Shao, Qi, and
  Li]{jiang2020bidirectional}
Jiang, P., Wu, A., Han, Y., Shao, Y., Qi, M., and Li, B.
\newblock Bidirectional adversarial training for semi-supervised domain
  adaptation.
\newblock In \emph{IJCAI}, 2020.

\bibitem[Jing et~al.(2020)Jing, Liu, Ding, Wang, Ding, Song, and
  Wen]{jing2020dynamic}
Jing, Y., Liu, X., Ding, Y., Wang, X., Ding, E., Song, M., and Wen, S.
\newblock Dynamic instance normalization for arbitrary style transfer.
\newblock In \emph{AAAI}, 2020.

\bibitem[Krizhevsky et~al.(2009)Krizhevsky, Hinton,
  et~al.]{krizhevsky2009learning}
Krizhevsky, A., Hinton, G., et~al.
\newblock Learning multiple layers of features from tiny images.
\newblock 2009.

\bibitem[Kuzborskij \& Orabona(2013)Kuzborskij and
  Orabona]{kuzborskij2013stability}
Kuzborskij, I. and Orabona, F.
\newblock Stability and hypothesis transfer learning.
\newblock In \emph{ICML}, 2013.

\bibitem[LeCun et~al.(1998)LeCun, Bottou, Bengio, and
  Haffner]{lecun1998gradient}
LeCun, Y., Bottou, L., Bengio, Y., and Haffner, P.
\newblock Gradient-based learning applied to document recognition.
\newblock \emph{Proceedings of the IEEE}, 86\penalty0 (11):\penalty0
  2278--2324, 1998.

\bibitem[Li et~al.(2021)Li, Pogodin, Sutherland, and Gretton]{li2021self}
Li, Y., Pogodin, R., Sutherland, D.~J., and Gretton, A.
\newblock Self-supervised learning with kernel dependence maximization.
\newblock In \emph{NeurIPS}, 2021.

\bibitem[Liang et~al.(2020)Liang, Hu, and Feng]{liang2020we}
Liang, J., Hu, D., and Feng, J.
\newblock Do we really need to access the source data? source hypothesis
  transfer for unsupervised domain adaptation.
\newblock In \emph{ICML}, 2020.

\bibitem[Liu et~al.(2019)Liu, Lu, Han, Niu, Zhang, and
  Sugiyama]{liu2019butterfly}
Liu, F., Lu, J., Han, B., Niu, G., Zhang, G., and Sugiyama, M.
\newblock Butterfly: one-step approach towards wildly unsupervised domain
  adaptation.
\newblock In \emph{NeurIPS LTS Workshop}, 2019.

\bibitem[Liu et~al.(2021{\natexlab{a}})Liu, Xu, Lu, and
  Sutherland]{Liu2021meta}
Liu, F., Xu, W., Lu, J., and Sutherland, D.~J.
\newblock Meta two-sample testing: Learning kernels for testing with limited
  data.
\newblock In \emph{NeurIPS}, 2021{\natexlab{a}}.

\bibitem[Liu et~al.(2021{\natexlab{b}})Liu, Hamilton, Long, Jiang, and
  Larochelle]{liu2020universal}
Liu, L., Hamilton, W., Long, G., Jiang, J., and Larochelle, H.
\newblock A universal representation transformer layer for few-shot image
  classification.
\newblock In \emph{ICLR}, 2021{\natexlab{b}}.

\bibitem[Liu et~al.(2021{\natexlab{c}})Liu, Zhou, Long, Jiang, Dong, and
  Zhang]{liu2020isometric}
Liu, L., Zhou, T., Long, G., Jiang, J., Dong, X., and Zhang, C.
\newblock Isometric propagation network for generalized zero-shot learning.
\newblock In \emph{ICLR}, 2021{\natexlab{c}}.

\bibitem[Long et~al.(2018)Long, Cao, Wang, and Jordan]{long2018conditional}
Long, M., Cao, Z., Wang, J., and Jordan, M.~I.
\newblock Conditional adversarial domain adaptation.
\newblock In \emph{NeurIPS}, 2018.

\bibitem[Ma et~al.(2020)Ma, Lewis, and Kleijn]{ma2020hsic}
Ma, W.-D.~K., Lewis, J., and Kleijn, W.~B.
\newblock {The {HSIC} bottleneck: Deep learning without back-propagation}.
\newblock In \emph{AAAI}, 2020.

\bibitem[Motiian et~al.(2017)Motiian, Jones, Iranmanesh, and
  Doretto]{motiian2017few}
Motiian, S., Jones, Q., Iranmanesh, S., and Doretto, G.
\newblock Few-shot adversarial domain adaptation.
\newblock In \emph{NeurIPS}, 2017.

\bibitem[Netzer et~al.(2011)Netzer, Wang, Coates, Bissacco, Wu, and
  Ng]{netzer2011reading}
Netzer, Y., Wang, T., Coates, A., Bissacco, A., Wu, B., and Ng, A.~Y.
\newblock Reading digits in natural images with unsupervised feature learning.
\newblock In \emph{NeurIPS}, 2011.

\bibitem[Oneto et~al.(2020)Oneto, Donini, Luise, Ciliberto, Maurer, and
  Pontil]{oneto2020exploiting}
Oneto, L., Donini, M., Luise, G., Ciliberto, C., Maurer, A., and Pontil, M.
\newblock Exploiting mmd and sinkhorn divergences for fair and transferable
  representation learning.
\newblock In \emph{NeurIPS}, 2020.

\bibitem[Park et~al.(2019)Park, Mello, Molchanov, Iqbal, Hilliges, and
  Kautz]{Park_2019_ICCV}
Park, S., Mello, S.~D., Molchanov, P., Iqbal, U., Hilliges, O., and Kautz, J.
\newblock Few-shot adaptive gaze estimation.
\newblock In \emph{ICCV}, 2019.

\bibitem[Peng et~al.(2017)Peng, Usman, Kaushik, Hoffman, Wang, and
  Saenko]{peng2017visda}
Peng, X., Usman, B., Kaushik, N., Hoffman, J., Wang, D., and Saenko, K.
\newblock Visda: The visual domain adaptation challenge.
\newblock \emph{arXiv preprint arXiv:1710.06924}, 2017.

\bibitem[Peng et~al.(2022)Peng, Liu, Zhang, Lan, Ye, Liu, and
  Han]{Peng2022bilateral}
Peng, X., Liu, F., Zhang, J., Lan, L., Ye, J., Liu, T., and Han, B.
\newblock Bilateral dependency optimization: Defending against model-inversion
  attacks.
\newblock In \emph{ACM KDD}, 2022.

\bibitem[Pogodin \& Latham(2020)Pogodin and Latham]{pogodin2020kernelized}
Pogodin, R. and Latham, P.
\newblock Kernelized information bottleneck leads to biologically plausible
  3-factor hebbian learning in deep networks.
\newblock In \emph{NeurIPS}, 2020.

\bibitem[Radford et~al.(2015)Radford, Metz, and
  Chintala]{radford2015unsupervised}
Radford, A., Metz, L., and Chintala, S.
\newblock Unsupervised representation learning with deep convolutional
  generative adversarial networks.
\newblock \emph{arXiv preprint arXiv:1511.06434}, 2015.

\bibitem[Shorten \& Khoshgoftaar(2019)Shorten and
  Khoshgoftaar]{shorten2019survey}
Shorten, C. and Khoshgoftaar, T.~M.
\newblock A survey on image data augmentation for deep learning.
\newblock \emph{Journal of Big Data}, 2019.

\bibitem[Shu et~al.(2018)Shu, Bui, Narui, and Ermon]{shu2018dirt}
Shu, R., Bui, H.~H., Narui, H., and Ermon, S.
\newblock A dirt-t approach to unsupervised domain adaptation.
\newblock In \emph{ICLR}, 2018.

\bibitem[Shui et~al.(2022)Shui, Wang, and Gagn{\'{e}}]{shui2022novel}
Shui, C., Wang, B., and Gagn{\'{e}}, C.
\newblock On the benefits of representation regularization in invariance based
  domain generalization.
\newblock \emph{Machine Learning}, 111\penalty0 (3):\penalty0 895--915, 2022.

\bibitem[Snell et~al.(2017)Snell, Swersky, and Zemel]{snell2017prototypical}
Snell, J., Swersky, K., and Zemel, R.
\newblock Prototypical networks for few-shot learning.
\newblock In \emph{NeurIPS}, 2017.

\bibitem[Song et~al.(2007)Song, Smola, Gretton, and
  Borgwardt]{song2007dependence}
Song, L., Smola, A., Gretton, A., and Borgwardt, K.~M.
\newblock A dependence maximization view of clustering.
\newblock In \emph{ICML}, 2007.

\bibitem[Song et~al.(2012)Song, Smola, Gretton, Bedo, and
  Borgwardt]{song2012feature}
Song, L., Smola, A., Gretton, A., Bedo, J., and Borgwardt, K.
\newblock Feature selection via dependence maximization.
\newblock \emph{Journal of Machine Learning Research}, 13\penalty0 (5), 2012.

\bibitem[Sun et~al.(2019)Sun, Liu, Chua, and Schiele]{sun2019meta}
Sun, Q., Liu, Y., Chua, T.-S., and Schiele, B.
\newblock Meta-transfer learning for few-shot learning.
\newblock In \emph{CVPR}, 2019.

\bibitem[Teshima et~al.(2020)Teshima, Sato, and Sugiyama]{teshima2020few}
Teshima, T., Sato, I., and Sugiyama, M.
\newblock Few-shot domain adaptation by causal mechanism transfer.
\newblock In \emph{NeurIPS}, 2020.

\bibitem[Tommasi et~al.(2010)Tommasi, Orabona, and Caputo]{tommasi2010safety}
Tommasi, T., Orabona, F., and Caputo, B.
\newblock Safety in numbers: Learning categories from few examples with multi
  model knowledge transfer.
\newblock In \emph{CVPR}, 2010.

\bibitem[Wang et~al.(2019)Wang, Mendez, Cai, and Eaton]{wang2019transfer}
Wang, B., Mendez, J., Cai, M., and Eaton, E.
\newblock Transfer learning via minimizing the performance gap between domains.
\newblock In \emph{NeurIPS}, 2019.

\bibitem[Wang et~al.(2023)Wang, Mendez, Shui, Zhou, Wu, Xu, Gagn, and
  Eaton]{wang2022gap}
Wang, B., Mendez, J.~A., Shui, C., Zhou, F., Wu, D., Xu, G., Gagn, C., and
  Eaton, E.
\newblock Gap minimization for knowledge sharing and transfer.
\newblock \emph{Journal of Machine Learning Research}, 24\penalty0
  (33):\penalty0 1--57, 2023.

\bibitem[Wang et~al.(2020)Wang, Yao, Kwok, and Ni]{wang2020generalizing}
Wang, Y., Yao, Q., Kwok, J.~T., and Ni, L.~M.
\newblock Generalizing from a few examples: A survey on few-shot learning.
\newblock \emph{ACM Computing Surveys}, 53\penalty0 (3):\penalty0 1--34, 2020.

\bibitem[Webb \& Sammut(2010)Webb and Sammut]{webb2010encyclopedia}
Webb, G.~I. and Sammut, C.
\newblock \emph{Encyclopedia of Machine Learning}.
\newblock 2010.

\bibitem[Yang et~al.(2020)Yang, Liu, and Xu]{yang2020free}
Yang, S., Liu, L., and Xu, M.
\newblock Free lunch for few-shot learning: Distribution calibration.
\newblock In \emph{ICLR}, 2020.

\bibitem[Yang et~al.(2021{\natexlab{a}})Yang, van~de Weijer, Herranz, Jui,
  et~al.]{yang2021exploiting}
Yang, S., van~de Weijer, J., Herranz, L., Jui, S., et~al.
\newblock Exploiting the intrinsic neighborhood structure for source-free
  domain adaptation.
\newblock In \emph{NeurIPS}, 2021{\natexlab{a}}.

\bibitem[Yang et~al.(2021{\natexlab{b}})Yang, Wang, van~de Weijer, Herranz, and
  Jui]{yang2021generalized}
Yang, S., Wang, Y., van~de Weijer, J., Herranz, L., and Jui, S.
\newblock Generalized source-free domain adaptation.
\newblock In \emph{ICCV}, 2021{\natexlab{b}}.

\bibitem[Yao et~al.(2021)Yao, Huang, Zhang, Wei, Tian, Zou, Huang,
  et~al.]{yao2021improving}
Yao, H., Huang, L.-K., Zhang, L., Wei, Y., Tian, L., Zou, J., Huang, J., et~al.
\newblock Improving generalization in meta-learning via task augmentation.
\newblock In \emph{ICML}, 2021.

\bibitem[Zamir et~al.(2018)Zamir, Sax, Shen, Guibas, Malik, and
  Savarese]{zamir2018taskonomy}
Zamir, A.~R., Sax, A., Shen, W., Guibas, L.~J., Malik, J., and Savarese, S.
\newblock Taskonomy: Disentangling task transfer learning.
\newblock In \emph{CVPR}, 2018.

\bibitem[Zhang et~al.(2020)Zhang, Liu, Fang, Yuan, Zhang, and
  Lu]{Zhang2020clarinet}
Zhang, Y., Liu, F., Fang, Z., Yuan, B., Zhang, G., and Lu, J.
\newblock Clarinet: {A} one-step approach towards budget-friendly unsupervised
  domain adaptation.
\newblock In \emph{IJCAI}, 2020.

\bibitem[Zheng \& Zhou(2021)Zheng and Zhou]{zheng2021exploiting}
Zheng, H. and Zhou, M.
\newblock Exploiting chain rule and bayes' theorem to compare probability
  distributions.
\newblock In \emph{NeurIPS}, 2021.

\bibitem[Zhong et~al.(2021)Zhong, Fang, Liu, Lu, Yuan, and Zhang]{Zhong2021how}
Zhong, L., Fang, Z., Liu, F., Lu, J., Yuan, B., and Zhang, G.
\newblock How does the combined risk affect the performance of unsupervised
  domain adaptation approaches?
\newblock In \emph{AAAI}, 2021.

\bibitem[Zhou et~al.(2020)Zhou, Jiang, Shui, Wang, and
  Chaib{-}draa]{zhou2021domain}
Zhou, F., Jiang, Z., Shui, C., Wang, B., and Chaib{-}draa, B.
\newblock Domain generalization with optimal transport and metric learning.
\newblock \emph{arXiv preprint arXiv: 2007.10573}, 2020.

\end{thebibliography}
\bibliographystyle{icml2023}

\clearpage
\onecolumn
\appendix

\section{Proof of Theorem~\ref{thm:sample}}\label{Asec:proof}

Before proving the Theorem~\ref{thm:sample}, we first introduce a McDiarmid-like inequality under the log-coefficient of a random vector $\mZ$.

\begin{lemma}[McDiarmid-like Inequality under the Log-coefficient of $\mZ$]
\label{lemma:mcd}
Let $\mu^{(m)}$ be a distribution defined over $\gZ^m$, and $\mZ=(Z_1,\dots,Z_m)\sim\mu^{(m)}$ be a random vector, and $g:\gZ^m\rightarrow\R$ with the following bounded differences property with parameters $\lambda_1,\dots,\lambda_m>0$:
\begin{align}
    \forall Z_i, Z_j:~|g(Z_i)-g(Z_j)|\leq \sum_{i=1}^m 1_{Z_i\neq Z_j}\lambda_i, 
\end{align}
where $\gZ^m=(\gX\times\gY)^m$.
If $\alpha_{\log}(\mZ)<1$, then, for all $t>0$,
\begin{align}
    \textnormal{Pr}[|g(\mZ)-\E[g(\mZ)]|\geq t] \leq 2\exp\left( -\frac{(1-\alpha_{\log}(\mZ))t^2}{2\sum_{i=1}^m\lambda^2_i}\right).
\end{align}
\end{lemma}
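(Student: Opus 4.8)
The plan is to reduce the two-sided tail bound to a single moment-generating-function (MGF) estimate via the Chernoff method. Concretely, I would show that for every $s\in\R$
\[
\E\left[\exp\left(s\big(g(\mZ)-\E[g(\mZ)]\big)\right)\right]\le \exp\left(\frac{s^2\sum_{i=1}^m\lambda_i^2}{2\big(1-\alpha_{\log}(\mZ)\big)}\right).
\]
Granting this, Markov's inequality applied to $\exp(s(g(\mZ)-\E[g(\mZ)]))$ and optimizing the free parameter at $s=t\big(1-\alpha_{\log}(\mZ)\big)/\sum_i\lambda_i^2$ yields $\textnormal{Pr}[g(\mZ)-\E[g(\mZ)]\ge t]\le\exp\big(-(1-\alpha_{\log}(\mZ))t^2/(2\sum_i\lambda_i^2)\big)$; running the same argument for $-g$ and taking a union bound produces the factor $2$ and the stated inequality. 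So the entire task is the MGF bound.

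To control the MGF I would use the Doob martingale decomposition $g(\mZ)-\E[g(\mZ)]=\sum_{i=1}^m D_i$, where $D_i=\E[g(\mZ)\mid Z_1,\dots,Z_i]-\E[g(\mZ)\mid Z_1,\dots,Z_{i-1}]$ is the martingale difference obtained by revealing one more coordinate. Conditioning successively on the past and bounding each local factor $\E[\exp(sD_i)\mid Z_{<i}]$ telescopes the global MGF into a product of these factors. In the independent case the bounded-differences hypothesis forces $|D_i|\le\lambda_i$ after integrating out the (independent) future coordinates, and Hoeffding's lemma gives $\E[\exp(sD_i)\mid Z_{<i}]\le\exp(s^2\lambda_i^2/2)$, which reproduces the product structure above with $\alpha_{\log}(\mZ)=0$.

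The heart of the argument, and the step I expect to be the main obstacle, is controlling $D_i$ when the $Z_i$ are dependent. Perturbing the value of $Z_i$ now has two effects: a direct effect on $g$, still bounded by $\lambda_i$ through the bounded-differences property, and an indirect effect through the shift it induces in the conditional law of the as-yet-unrevealed coordinates $Z_{>i}$. The role of the log-influence $I^{\log}_{j,i}(\mZ)$ is precisely to quantify this indirect effect: it bounds the logarithmic perturbation of the relevant conditional distribution under a unilateral change of coordinate $j$, in the spirit of a Dobrushin interdependence coefficient. I would use these influences to compare the dependent conditional measures against their ``frozen'' counterparts and show that the effective range of each $D_i$ is inflated by a factor governed by $\sum_{j\ne i}I^{\log}_{j,i}(\mZ)$, so that the accumulated inflation across all coordinates is captured by $\alpha_{\log}(\mZ)=\max_i\sum_{j\ne i}I^{\log}_{j,i}(\mZ)$. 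The two delicate points are: (i) ensuring the dependency enters \emph{additively}, so that the correction surfaces as the single factor $1/(1-\alpha_{\log}(\mZ))$ rather than a product that blows up with $m$ — which is exactly where the hypothesis $\alpha_{\log}(\mZ)<1$ is needed; and (ii) checking that the conditional-MGF estimate survives the telescoping over coordinates without losing the uniform constant. This mirrors the concentration machinery for weakly dependent data under Dobrushin-type conditions, and I would adapt the comparison of conditional measures from \citet{Dagan2019learning} rather than rederive it from scratch.
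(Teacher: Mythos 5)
Your overall strategy is sound, but it takes a genuinely different (and much longer) route than the paper. The paper's proof of this lemma is essentially a two-step reduction to cited results: it first observes (via Definition 2.2 and Lemma 5.2 of \citet{Dagan2019learning}) that the log-influences dominate the total-variation Dobrushin influences, so $\alpha_{\log}(\mZ)<1$ implies Dobrushin's condition with coefficient $\alpha\leq\alpha_{\log}(\mZ)$, and then invokes Theorem 2.3 of \citet{Dagan2019learning} verbatim, using monotonicity of the bound in $\alpha$ to replace $\alpha$ by $\alpha_{\log}(\mZ)$. What you propose is, in effect, to reprove that cited concentration theorem from scratch: Chernoff method, Doob martingale decomposition, and a Dobrushin-type control of how revealing $Z_i$ perturbs the conditional law of $Z_{>i}$. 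That is the correct architecture for such results, and you correctly identify both where $\alpha_{\log}(\mZ)<1$ is needed and that the hard step is the conditional MGF bound under dependence. What the paper's route buys is brevity and safety; what yours buys is transparency about where the $(1-\alpha_{\log}(\mZ))$ factor comes from --- but note that you would still need the same comparison lemma (log-influence dominates Dobrushin influence) that the paper cites, since the conditional-measure perturbation estimates are naturally stated in terms of total-variation Dobrushin coefficients rather than log-influences.

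One concrete caution: the accounting you sketch --- inflating the effective range of each $D_i$ by a factor governed by $\sum_{j\neq i}I^{\log}_{j,i}(\mZ)$ and then summing squares --- most naturally yields a denominator of the form $\bigl\|(I-A)^{-1}\lambda\bigr\|_2^2$, which when bounded crudely gives $(1-\alpha_{\log}(\mZ))^2$ in the numerator of the exponent, not the first power stated in the lemma. Obtaining the sharper $(1-\alpha_{\log}(\mZ))$ dependence is exactly the nontrivial content of Theorem 2.3 in \citet{Dagan2019learning}, and it does not fall out of the naive telescoping you describe. Since you explicitly defer that step to an adaptation of their argument, your proof ultimately rests on the same external machinery as the paper's; you have just unpacked more of the scaffolding around it. As written, the sketch is acceptable provided that deferred step is genuinely imported rather than re-derived by the cruder route, which would prove a weaker constant than the one claimed.
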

\begin{proof}
Based on Definition 2.2 and Lemma 5.2 in \citep{Dagan2019learning}, we know that $\mu^{(m)}$ satisfies Dobrushin's condition with a coefficient $\alpha<1$. Thus, based on Theorem 2.3 in \citep{Dagan2019learning}, we have
\begin{align}
    \textnormal{Pr}[|g(\mZ)-\E[g(\mZ)]|\geq t] \leq 2\exp\left( -\frac{(1-\alpha)t^2}{2\sum_{i=1}^m\lambda^2_i}\right).
\end{align}
Since, $\alpha\leq\alpha_{\log}(\mZ)$, we prove this lemma.
\end{proof}
Then, we introduce a recent result regarding bounding the expected suprema of a empirical process using the corresponding Gaussian complexity.
\begin{theorem}[\citep{Dagan2019learning}]
\label{thm:con}
Let $\mZ$ be a random vector over some domain $\gZ^m$ and let $\gG$ be a class of functions from $\gZ$ to $\R$. If $\alpha_{\log}(\mZ)<1/2$, then
\begin{align}
    \mathop{\E}\limits_{S\sim \mZ}\mathop{\sup}_{g\in\gG}\left( \frac{1}{m} \sum_{i=1}^m g(s_i)-\mathop{\E}\limits_{S}\left[\frac{1}{m}\sum_{i=1}^m g(s_i) \right]\right) \leq \frac{C\mathfrak{G}_{\mZ}(\gG)}{\sqrt{1-2\alpha_{\log}(\mZ)}},
\end{align}
where $C>0$ is a universal constant, and $S=(s_1,\dots,s_m)$ is a sample of $\mZ$.
\end{theorem}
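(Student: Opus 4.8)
The plan is to prove this one-sided suprema bound by the classical two-move route of empirical-process theory --- symmetrization followed by a Gaussian comparison --- but with every concentration step replaced by the McDiarmid-type inequality under the log-coefficient (Lemma~\ref{lemma:mcd}), which is exactly the tool that survives the loss of independence. The object to control is the centered empirical process $X_g = \frac{1}{m}\sum_{i=1}^m g(s_i) - \E[\frac1m\sum_i g(s_i)]$ indexed by $g\in\gG$, and the target is its expected supremum; the right-hand side $\mathfrak{G}_{\mZ}(\gG)$ is the expected supremum of the Gaussian process $\xi \mapsto \frac1m\sum_i \xi_i g(s_i)$ with i.i.d.\ Gaussian weights, averaged over $S\sim\mZ$. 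So the heart of the matter is to show that the dependent empirical process is dominated, up to a universal constant and the penalty factor $1/\sqrt{1-2\alpha_{\log}(\mZ)}$, by this matched Gaussian process.

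First I would introduce a ghost sample $S'=(s_1',\dots,s_m')$ drawn as an independent copy of $\mZ$ and write $\E_S\sup_g X_g \le \E_{S,S'}\sup_g \frac1m\sum_i (g(s_i)-g(s_i'))$ by Jensen's inequality. Next I would establish the subgaussian increments needed to pass to Gaussians: for fixed $g,h\in\gG$ the statistic $\frac1m\sum_i (g-h)(s_i)$ changes by at most some $\lambda_i(g,h)$ (a coordinatewise modulus, bounded by $\frac2m\|(g-h)\|_\infty$) when one coordinate is resampled, so Lemma~\ref{lemma:mcd} yields a subgaussian tail with variance proxy $\frac{\sum_i \lambda_i(g,h)^2}{1-\alpha_{\log}(\mZ)}$. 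Feeding these subgaussian increments into Talagrand's generic-chaining upper bound controls $\E\sup_g X_g$ by the functional $\gamma_2(\gG,d)$ of the induced metric $d$, and the majorizing-measure theorem identifies $\gamma_2(\gG,d)$ with the expected supremum of a Gaussian process of the same covariance --- that is, with $\mathfrak{G}_{\mZ}(\gG)$ up to an absolute constant --- after integrating over $S$.

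The step I expect to be the main obstacle is getting the sharp coefficient $1-2\alpha_{\log}(\mZ)$ rather than the $1-\alpha_{\log}(\mZ)$ that a naive pass through Lemma~\ref{lemma:mcd} produces, and understanding why this forces the hypothesis $\alpha_{\log}(\mZ)<1/2$. The difficulty is structural: ordinary symmetrization relies on the exchangeability $s_i\leftrightarrow s_i'$, which fails under dependence because swapping a single coordinate does not preserve the joint law of $(\mZ,\mZ')$. The fix is to run the concentration and chaining argument on the coupled vector $(\mZ,\mZ')$ itself, whose log-coefficient I would bound by $2\alpha_{\log}(\mZ)$; this doubling is what replaces $1-\alpha$ by $1-2\alpha$ in the exponent of the concentration bound and hence in the final factor $1/\sqrt{1-2\alpha_{\log}(\mZ)}$, and it is also why the argument only closes when $\alpha_{\log}(\mZ)<1/2$. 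Verifying this coefficient-doubling for the coupled system, and checking that the resulting discrepancy between the dependent symmetrized process and the matched Gaussian process is absorbed into the universal constant $C$, is the technical crux; the remaining chaining and majorizing-measure estimates are then standard.
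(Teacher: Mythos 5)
You should first note that the paper does not prove this statement at all: it is imported verbatim from \citet{Dagan2019learning} (the appendix uses it purely as a black box to derive Lemma~\ref{lemma:main}), so there is no in-paper proof to match and your attempt must stand on its own. On its own terms it has two genuine gaps. The first is a metric mismatch in the chaining step. Applying Lemma~\ref{lemma:mcd} to an increment $\frac{1}{m}\sum_{i=1}^m\bigl(g(s_i)-h(s_i)\bigr)$ gives coordinatewise moduli $\lambda_i(g,h)\le\frac{2}{m}\|g-h\|_\infty$, hence subgaussian increments with respect to a \emph{sup-norm} metric $d_\infty(g,h)\asymp\|g-h\|_\infty/\sqrt{m}$, and generic chaining then bounds the expected supremum by $\gamma_2(\gG,d_\infty)$ (up to the dependence factor). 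But $\mathfrak{G}_{\mZ}(\gG)$ is, by the majorizing-measure theorem applied conditionally on $S$, equivalent to $\gamma_2$ of the \emph{empirical $\ell_2$} metric $d_{2,S}(g,h)=\frac{1}{m}\bigl(\sum_{i=1}^m(g(s_i)-h(s_i))^2\bigr)^{1/2}$. Since $d_{2,S}\le d_\infty$, you get $\gamma_2(\gG,d_\infty)\ge\gamma_2(\gG,d_{2,S})$: your chaining bound sits on the wrong side of the theorem's right-hand side, and the two can differ enormously (e.g., classes in which all distinct pairs are at sup-distance $1$ while the empirical $\ell_2$ distances are tiny). Relatedly, your appeal to majorizing measures to ``identify $\gamma_2(\gG,d)$ with the Gaussian process of the same covariance'' is only valid when $d$ is the canonical metric of that Gaussian process; your $d$ is the bounded-differences metric, whose associated Gaussian process is not the one defining $\mathfrak{G}_{\mZ}(\gG)$. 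In the i.i.d.\ setting this trap is avoided precisely because symmetrization injects random signs pointwise, conditionally on $S$, producing the canonical Rademacher/Gaussian process directly; a McDiarmid-plus-chaining argument is not a substitute for that move, which is the real obstruction under dependence.

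The second gap is that your proposed origin of the factor $1-2\alpha_{\log}(\mZ)$ is incorrect. If $S'$ is an \emph{independent} copy of $\mZ$, then by Definition~\ref{def:log_coe} the joint density of $(\mZ,\mZ')$ factorizes, all cross-block log-influences vanish, and $\alpha_{\log}\bigl((\mZ,\mZ')\bigr)=\alpha_{\log}(\mZ)$ --- there is no doubling to $2\alpha_{\log}(\mZ)$. So even granting the rest of your plan, running concentration on the coupled vector would return the factor $1-\alpha_{\log}$, not $1-2\alpha_{\log}$, and the threshold $\alpha_{\log}<1/2$ would remain unexplained. In \citet{Dagan2019learning} the $1-2\alpha$ arises from a different mechanism: the influence matrix $A$ enters the exponential-moment comparison symmetrically (effectively through $I-(A+A^\top)$, each pairwise influence counted in both directions), which is what forces $\alpha<1/2$; it is not produced by concentrating a two-sample coupled system. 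Your ghost-sample Jensen step is fine under dependence, but everything downstream of it needs a genuinely different idea.
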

Note that, the above result is very general, it does not assume that the $m$ marginals of the distribution of $\mZ$ are identical. Based on the above theorem and lemma, we can prove the following lemma.
\begin{lemma}
\label{lemma:main}
Let $\mZ$ be a random vector over some domain $\gZ^m$ and let $\gG$ be a class of functions from $\gZ$ to $\R$. If $\alpha_{\log}(\mZ)<1/2$, and there exists $L>0$ such that for any $g\in\gG$ and $Z_i$, $|g(Z_i)|\leq L$, then, for any $t>0$,
\begin{align}
    \mathop{\textnormal{Pr}}\limits_{S\sim\mZ}\left[\mathop{\sup}\limits_{g\in\gG}\left|\frac{1}{m} \sum_{i=1}^m g(s_i)-\mathop{\E}\limits_{S}\left[\frac{1}{m}\sum_{i=1}^m g(s_i) \right]\right|
    \geq \frac{C\mathfrak{G}_{\mZ}(\gG)}{\sqrt{1-2\alpha_{\log}(\mZ)}} + t  \right] \leq e^{ -\frac{(1-\alpha_{\log}(\mZ))mt^2}{C'L^2}}
\end{align}
for some universal constants $C,C'>0$.
\end{lemma}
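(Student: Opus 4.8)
The plan is to follow the classical two-step recipe for uniform deviation bounds, now executed with the dependent-data tools provided above: first concentrate the supremum-deviation around its mean via the McDiarmid-like inequality (Lemma~\ref{lemma:mcd}), and then bound that mean by the Gaussian complexity using Theorem~\ref{thm:con}.

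First I would introduce the centered supremum
\begin{equation}
\Phi(S) = \sup_{g\in\gG}\left|\frac{1}{m}\sum_{i=1}^m g(s_i) - \E_{S}\left[\frac{1}{m}\sum_{i=1}^m g(s_i)\right]\right|,
\end{equation}
and verify that it satisfies the bounded-differences hypothesis of Lemma~\ref{lemma:mcd}. Because $|g|\le L$ uniformly over $\gG$, replacing a single coordinate of $S$ changes each empirical average $\frac{1}{m}\sum_i g(s_i)$ by at most $2L/m$; the subtracted expectation does not depend on the realized sample, and taking $\sup_g|\cdot|$ does not increase this per-coordinate fluctuation, so $\Phi$ has bounded differences with $\lambda_i=2L/m$ and hence $\sum_{i=1}^m\lambda_i^2=4L^2/m$.

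Next I would apply Lemma~\ref{lemma:mcd} (legitimate since $\alpha_{\log}(\mZ)<1/2<1$), obtaining
\begin{equation}
\textnormal{Pr}\left[\,|\Phi(\mZ)-\E[\Phi(\mZ)]|\ge t\,\right]\le 2\exp\left(-\frac{(1-\alpha_{\log}(\mZ))\,m\,t^2}{8L^2}\right).
\end{equation}
To control $\E[\Phi(\mZ)]$, Theorem~\ref{thm:con} does the heavy lifting, but it bounds only the signed supremum; to handle the absolute value I would split $\sup_g|\cdot|\le \sup_g(\cdot)+\sup_g(-\,\cdot)$ and invoke Theorem~\ref{thm:con} for both $\gG$ and $-\gG$. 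Since the Gaussian complexity is symmetric, $\mathfrak{G}_{\mZ}(-\gG)=\mathfrak{G}_{\mZ}(\gG)$, this yields $\E[\Phi(\mZ)]\le C\mathfrak{G}_{\mZ}(\gG)/\sqrt{1-2\alpha_{\log}(\mZ)}$ after folding the factor $2$ into the universal constant $C$.

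Finally I would combine the two estimates: since $C\mathfrak{G}_{\mZ}(\gG)/\sqrt{1-2\alpha_{\log}(\mZ)}\ge\E[\Phi(\mZ)]$, the event in the statement is contained in $\{\Phi(\mZ)-\E[\Phi(\mZ)]\ge t\}$, to which the concentration bound above applies; absorbing the leading factor $2$ and the constant $8$ into a single universal $C'$ gives the claimed exponential tail. I expect the only genuinely delicate point to be the passage from the signed supremum of Theorem~\ref{thm:con} to the absolute-value supremum $\Phi$ — that is, justifying the reuse of the same Gaussian-complexity bound for $-\gG$ via its symmetry — while the bounded-differences verification and the constant bookkeeping (folding factors of $2$ into $C,C'$) are routine.
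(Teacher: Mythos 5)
Your overall architecture is the same as the paper's: concentrate the supremum around its mean with the McDiarmid-like inequality (Lemma~\ref{lemma:mcd}), bound the mean by the Gaussian complexity via Theorem~\ref{thm:con}, handle the sign issue through $-\gG$ and the symmetry $\mathfrak{G}_{\mZ}(-\gG)=\mathfrak{G}_{\mZ}(\gG)$, and absorb the numerical factors into $C,C'$. Your verification of the bounded-differences property for the absolute-value supremum $\Phi$ (with $\lambda_i=2L/m$) and the resulting exponent $-(1-\alpha_{\log}(\mZ))mt^2/(8L^2)$ are both fine.

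The one genuine gap is the step $\sup_{g}|a_g|\le\sup_{g}a_g+\sup_{g}(-a_g)$, where $a_g=\frac{1}{m}\sum_i g(s_i)-\E_S[\frac{1}{m}\sum_i g(s_i)]$. This pointwise inequality is false: writing $U=\sup_g a_g$ and $V=\sup_g(-a_g)$, one has $\sup_g|a_g|=\max(U,V)$, and $\max(U,V)\le U+V$ requires $\min(U,V)\ge 0$, which fails whenever every $a_g$ has the same sign (e.g., a single function with $a_g=-3$ gives $\max(U,V)=3$ but $U+V=0$). Nor does the inequality hold after taking expectations without further structure on $\gG$, so the claimed bound $\E[\Phi(\mZ)]\le C\mathfrak{G}_{\mZ}(\gG)/\sqrt{1-2\alpha_{\log}(\mZ)}$ is not established by your argument. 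The repair is standard and is exactly what the paper does: prove the one-sided statement for the signed supremum $M(S)=\sup_{g\in\gG}a_g$ (to which Theorem~\ref{thm:con} applies directly), repeat the argument for $-\gG$ using $\mathfrak{G}_{\mZ}(-\gG)=\mathfrak{G}_{\mZ}(\gG)$, and combine the two one-sided tails by a union bound, folding the resulting factor of $2$ into $C'$. Alternatively you could keep your $\Phi$ and write $\sup_g|a_g|=\sup_{g\in\gG\cup(-\gG)}a_g$, applying Theorem~\ref{thm:con} to the union class, but then you must separately control $\mathfrak{G}_{\mZ}(\gG\cup(-\gG))$ in terms of $\mathfrak{G}_{\mZ}(\gG)$, which your proposal does not do.
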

\begin{proof}
We first consider proving the non-absolute-value version. Let 
\begin{align}
    M(S) = \mathop{\sup}\limits_{g\in\gG}\left(\frac{1}{m} \sum_{i=1}^m g(s_i)-\mathop{\E}\limits_{S}\left[\frac{1}{m}\sum_{i=1}^m g(s_i) \right]\right).
\end{align}
For any $S\sim\mZ$ and $S'\sim\mZ$, we have that $|M(S)-M(S')|\leq\sum_{i=1}^m2L1_{s_i\neq s_i^\prime}/m$. According to Lemma~\ref{lemma:mcd}, we have
\begin{align}
    \mathop{\textnormal{Pr}}\limits_{S\sim\mZ}\left[M(S)-\E[M(S)]\geq t  \right] \leq \exp\left({ -\frac{(1-\alpha_{\log}(\mZ))mt^2}{C'L^2}}\right)
\end{align}
for some universal constant $C'>0$. Then, combining $\E[M(S)]$ (based on Theorem~\ref{thm:con}), we have
\begin{align}
    \mathop{\textnormal{Pr}}\limits_{S\sim\mZ}\left[\mathop{\sup}\limits_{g\in\gG}\left(\frac{1}{m} \sum_{i=1}^m g(s_i)-\mathop{\E}\limits_{S}\left[\frac{1}{m}\sum_{i=1}^m g(s_i) \right]\right)\geq \frac{C\mathfrak{G}_{\mZ}(\gG)}{\sqrt{1-2\alpha_{\log}(\mZ)}} + t  \right] \leq e^{ -\frac{(1-\alpha_{\log}(\mZ))mt^2}{C'L^2}}.
\end{align}
For the opposite inequality, $-M(S)$ part, following \citep{Dagan2019learning}, we can apply the same arguments on $-\gG$. Note that $\mathfrak{G}(-\gG)=\mathfrak{G}(\gG)$, which concludes the bound.
\end{proof}
The above lemma is a slightly general version of Theorem 6.7 in \citep{Dagan2019learning} by considering the influence of $\alpha_{\log}(\mZ)$. Based on Lemma~\ref{lemma:main}, we can prove the Theorem~\ref{thm:sample} below.

\thmCore*
\begin{proof}
Let $S$ be the set of $m_{\rm u}$ unlabeled data. Based on the relation between VC dimension and the Gaussian complexity, Lemma~\ref{lemma:main} gives that, with probability at least $1-\frac{\delta}{2}$, we have
\begin{equation}\nonumber
    |\textnormal{Pr}_{x\sim\bar{S}}
    [\chi_h(x)
    =1]-\textnormal{Pr}_{x\sim
    {\mu_X^t}}[\chi_h(x)=1]|\le\epsilon\quad \text{for all }\chi_h\in\chi(\mathcal{H}),
\end{equation}
where $\bar{S}$ denotes the uniform distribution over $S$.
Since $\chi_h(x)=\chi(h,x)$, this implies that we have
\begin{equation}\nonumber
    |\chi(h,D)-\hat{\chi}(h,S)|\le\epsilon\quad \text{for all }h\in\mathcal{H}.
\end{equation}
Therefore, the set of hypotheses with $\hat{\chi}(h,S)\ge1-t-\epsilon$ is contained in $\mathcal{H}_{{\mu_X^t},\chi}(t+2\epsilon)$.

The bound on the number of labeled data now follows directly from known concentration results using the expected number of partitions instead of the maximum in the standard VC-dimension bounds. This bound ensures that with probability $1-\frac{\delta}{2}$, none of the functions $h\in\mathcal{H}_{{\mu_X^t},\chi}(t+2\epsilon)$ with $err(h)\ge\epsilon$ have $\widehat{err}(h)=0$.

The above two arguments together imply that with probability $1-\delta$, all $h\in\mathcal{H}$ with $\widehat{err}(h)=0$ and $\hat{\chi}(h,S)\ge1-t-\epsilon$ have $err(h)\ge\epsilon$, and furthermore $f^*$ has $\hat{\chi}(f^*,S)\ge1-t-\epsilon$. This in turn implies that with probability at least $1-\delta$, we have $err(\hat{h})\le\epsilon$, where
$
    \hat{h}=\mathop{\arg\max}_{h\in\mathcal{H}_0}\hat{\chi}(h,S).
$
\end{proof}

\section{Datasets}
\label{Asec:data}
\paragraph{Digits.} Following TOHAN \citep{chi2021tohan}, we conduct 6 adaptation experiments on digits datasets: $M\to U$, $M \to S$, $S \to U$,$S \to M$, $U \to M$ and $U \to S$. MNIST ($M$) \citep{lecun1998gradient} is the handwritten digits dataset, which has been size-normalized and centered in $28 \times 28$ pixels. SVHN ($S$) \citep{netzer2011reading} is the real-world image digits dataset, of which images are $32 \times 32$ pixels with 3 channels. USPS ($U$) \citep{hull1994database} data are 16×16 grayscale pixels. The SVHN and USPS images are resized to $28\times 28$ grayscale pixels in the adaptation task \citep{chi2021tohan}. 

\paragraph{Objects.} Following \citet{sun2019meta}, we compared DEG-Net and benchmark on CIFAR-10 and STL-10. The CIFAR-10 \citep{krizhevsky2009learning} dataset contains 60, 000 $32 \times 32$ color images in 10 categories, while the STL-10 \citep{coates2011analysis} dataset is inspired by the CIFAR-10 dataset with some modifications. However, these two datasets only contain nine overlapping classes. We removed the non-overlapping classes (“frog”
and “monkey”) \citep{shu2018dirt}. We also compared DEG-Net and benchmark on VisDA-C \citep{peng2017visda}, which is a challenging large-scale datasets that mainly focuses on the 12-class synthesisto-real classification task.

\section{Details regarding Experiments}\label{Asec:exp}

\textbf{Baselines.} We follow the standard domain-adaptation protocols \citep{shu2018dirt} and compare DEG-Net with 4 baselines: (1) \emph{Without adaptation} (WA): to classify the target domain with the well-trained source domain classifiers. (2) \emph{Fine tuning} (FT): to train the last connected layer of the classifier with few accessible labeled data. (3) \emph{SHOT}: an HTL method, where we modify it to use both the labeled target data and unlabeled target data \citep{liang2020we}. (4) \emph{S+FADA}:to generate unlabeled data using the loss $\mathcal{L}_c$ with the well-trained source clasifier and apply them into DANN \citep{ganin2016domain}. (5) \emph{T+FADA}:to generate unlabeled data using the loss $\mathcal{L}_s$ with the few labeled target data and apply them into DANN. (6) \emph{TOHAN}: a novel FHA method, which generates the specific category unlabeled data separately \citep{chi2021tohan}.
\begin{table*}[!tbp]
  \centering
  \footnotesize
  \caption{The Diversity of the target data and generated data by different methods.}\label{tab:diversity}
  \vspace{1mm}
    \begin{tabular}{cccc}
        \toprule
    Task  &  Target Data & TOHAN & DEG-Net \\
    \midrule
    $M\to S$   & \multirow{2}[2]{*}{0.0013} & 0.0027 & \textbf{0.0019} \\
    $U\to S$   &       & 0.0025 & \textbf{0.0021} \\
    \midrule
    $S\to M$   & \multirow{2}[2]{*}{0.0004} & 0.0016 & \textbf{0.0013} \\
    $U\to M$   &       & 0.0014 & \textbf{0.0008} \\
    \midrule
    $S\to U$   & \multirow{2}[1]{*}{0.0002} & 0.0012 & \textbf{0.001} \\
    $M\to U$  &       & 0.0009 & \textbf{0.0005} \\
    \bottomrule
    \end{tabular}%
  \label{tab:addlabel}%
  \vspace{-1em}

\end{table*}%

\begin{table*}[!tbp]
\centering
\footnotesize
  \caption{Classification accuracy$ \pm$standard deviation (\%) on digits FHA tasks of the data augmentation. \textbf{Bold} value represents the highest average accuracy on each column.}
  \label{tab:augmentation}
  \vspace{1mm}
  \begin{tabular}{p{6.25em}<{\centering}cccccccccp{5.19em}}
\toprule
\multirow{2}{*}{Method}                                 & \multirow{2}{*}{Tasks} & \multicolumn{7}{c}{Number   of Target Data per Class}                                                                                     \\ \cline{3-9} 
                                                        &                        & 1                 & 2                 & 3                 & 4                 & 5                 & 6                 & 7                 \\ \midrule
\multirow{5}{*}{TOHAN w/ aug} & $M\to U$                   & 77.1$\pm$0.4          & 83.5$\pm$0.6          & 84.0$\pm$0.7          & 86.7$\pm$1.1          & 87.5$\pm$0.6          & 88.1$\pm$1.4          & 89.4$\pm$1.1          \\
                                                        &$M\to S$                   & 26.7$\pm$1.0          & 27.8$\pm$1.6          & 29.7$\pm$1.3          & 29.4$\pm$0.7          & 30.3$\pm$1.2          & 32.4$\pm$0.8          & 33.5$\pm$1.5          \\
                                                        & $S\to M$                   & 76.4$\pm$0.5          & 78.6$\pm$0.3          & 82.7$\pm$0.1          & 86.5$\pm$0.3          & 87.9$\pm$0.2          & 88.2$\pm$0.3          & 89.6$\pm$0.4          \\
                                                        & $U\to M$                  & 82.1$\pm$0.7          & 84.9$\pm$1.3          & 85.3$\pm$0.6          & 86.7$\pm$1.5          & 87.4$\pm$0.8          & 87.9$\pm$0.7          & 89.8$\pm$0.4          \\
                                                        & Avg.     & 65.6$\pm$0.7          & 68.7$\pm$1.0          & 70.4$\pm$0.7          & 72.4$\pm$0.9          & 73.3$\pm$0.7          & 74.1$\pm$0.8          & 75.6$\pm$0.8          \\ \midrule
\multirow{5}{*}{TOHAN}                                  & $M\to U$                   & 76.0$\pm$1.9          & 83.3$\pm$0.3          & 84.2$\pm$0.4          & 86.5$\pm$1.1          & 87.1$\pm$1.3          & 88.0$\pm$0.5          & 89.7$\pm$0.5          \\
                                                        &$M\to S$                   & 26.7$\pm$0.1          & 28.6$\pm$1.1          & 29.5$\pm$1.4          & 29.6$\pm$0.4          & 30.5$\pm$1.2          & 32.1$\pm$0.2          & 33.2$\pm$0.8          \\
                                                        & $S\to M$                   & 76.0$\pm$1.9          & 83.3$\pm$0.3          & 84.2$\pm$0.4          & 86.5$\pm$1.1          & 87.1$\pm$1.3          & 88.0$\pm$0.5          & 89.7$\pm$0.5          \\
                                                        & $U\to M$                  & 84.0$\pm$0.5          & 85.2$\pm$0.3          & 85.6$\pm$0.7          & 86.5$\pm$0.5          & 87.3$\pm$0.6          & 88.2$\pm$0.7          & 89.2$\pm$0.5          \\
                                                        & Avg.    & 65.7$\pm$1.1          & \textbf{70.1$\pm$0.5} & 70.9$\pm$0.7          & 72.2$\pm$0.8          & 73.0$\pm$1.1          & 74.0$\pm$0.5          & 75.5$\pm$0.6          \\ \midrule
\multirow{5}{*}{DEG-Net}                                & $M\to U$                   & 83.1$\pm$0.9          & 86.2$\pm$0.8          & 86.5$\pm$0.6          & 88.7$\pm$0.9          & 89.6$\pm$0.5          & 91.5$\pm$0.6          & 92.1$\pm$0.6          \\
                                                        &$M\to S$                   & 27.2$\pm$0.3          & 28.5$\pm$1.3          & 29.7$\pm$0.9          & 30.7$\pm$0.8          & 32.9$\pm$1.5          & 33.7$\pm$1.8          & 34.9$\pm$1.6          \\
                                                        & $S\to M$                   & 76.2$\pm$1.3          & 78.2$\pm$1.3          & 85.7$\pm$0.6          & 85.9$\pm$0.8          & 88.6$\pm$1.6          & 89.5$\pm$1.2          & 90.2$\pm$0.7          \\
                                                        & $U\to M$                  & 82.2$\pm$0.7          & 85.9$\pm$0.6          & 86.5$\pm$1.5          & 87.8$\pm$0.9          & 88.9$\pm$0.9          & 90.3$\pm$0.5          & 91.6$\pm$1.2          \\
                                                        & Avg.     & \textbf{67.1$\pm$0.8} & 69.7$\pm$1.0          & \textbf{72.1$\pm$0.9} & \textbf{73.3$\pm$0.9} & \textbf{75.0$\pm$1.1} & \textbf{76.3$\pm$1.0} & \textbf{77.2$\pm$1.0} \\ \bottomrule
\end{tabular}
\end{table*}
\textbf{Implementation Details.}
We implement all methods by PyTorch 1.7.1 and Python 3.7.6, and conduct all the experiments on NVIDIA RTX 2080Ti GPUs. Due to the limitation of accessible computing resources, we can not choose more complex networks as the backbone of the generator.

Our conditional generator $G$ uses the standard DCGAN network \citep{radford2015unsupervised}. We adopt the backbone network of LeNet-5 with batch normalization and dropout to extract the group discriminator feature in the digits tasks and adopt the backbone networt of ResNet50 in the object task. We employ connected layers with the softmax function as the classifier to obtain the probability. The semantic feature in digital tasks is the output of the first fully connected layer. We adopt 3 connected layers with softmax function as the group discriminator $D$. While estimating the HSIC measure, we add $\epsilon$ to the kernel function for ensuring the kernel matrix be positive.

\textbf{Hyper-parameter Settings.} 
Following the common protocol of domain adaptation \citep{shu2018dirt}, we set fixed hyper-parameters for the different datasets. We pretrain the conditional generator for 300 epochs and pretrain the group discriminator for 100 epochs. The training step of the classifier (i.e. the adaptation module) is set to 50. As for the generator and the group discriminator, the learning rate of adam optimizer is set to $1\times10^{-4}$. As for the classifier, the learning rate of adam optimizer is set to $1\times10^{-3}$. The tradeoff parameter $\lambda$ in Eq.~(\ref{equ:loss_sc}) is set to $0.9$ and the tradeoff parameter $\beta$ in Eq.~(\ref{equ:loss_gd}) is set to $0.1$. Following \citet{long2018conditional} the radeoff parameter $\gamma$ in Eq.~(\ref{equ:loss_f}) is set to $\frac{2}{1+\exp({-10\Dot{q}})}-1$.

\begin{table}[!tbp]
  \centering
  \footnotesize
  \caption{Classification accuracy (\%) on digits FHA tasks using the generated data. \textbf{Bold} value represents
the highest average accuracy on each column}
\label{tab:generated}
\vspace{1mm}
    \begin{tabular}{p{3em}<{\centering}ccccccccc}
    \toprule
    \multicolumn{1}{c}{\multirow{2}[4]{*}{Task}} & \multicolumn{1}{c}{\multirow{2}[4]{*}{Method}} & \multicolumn{1}{c}{\multirow{2}[4]{*}{Number of Generated Data per Class}} & \multicolumn{7}{c}{Number of Target Data per Class} \\
\cmidrule{4-10}          &       &       & 1     & 2     &3     & 4     &5     &6     & 7\\
    \midrule
    \multicolumn{1}{c}{\multirow{6}[4]{*}{$M \to U$}} & \multicolumn{1}{c}{\multirow{3}[3]{*}{TOHAN}} & 0     & \textbf{76.0} & 83.3  & 84.2  & \textbf{86.5} & \textbf{87.1} & \textbf{88.0} & \textbf{89.7} \\
          &       & 5     & 75.8  & \textbf{83.7} & \textbf{84.3} & 84.3  & 85.0  & 87.5  & 88.1 \\
          &       & 20    & 76.0  & 83.3  & 84.3  & 84.0  & 85.1  & 87.2  & 89.5 \\
\cmidrule{2-10}          & \multicolumn{1}{c}{\multirow{3}[3]{*}{DEG-Net}} & 0     & \textbf{83.1} & \textbf{86.2} & \textbf{86.5} & \textbf{88.7} & 89.6  & 91.5  & 92.1 \\
          &       & 5     & 82.6  & 86.0  & 85.9  & 88.2  & 88.7  & 91.9  & 92.3 \\
          &       & 20    & 81.3  & 84.3  & 86.2  & 88.6  & \textbf{90.3} & \textbf{92.3} & \textbf{93.4} \\
    \midrule
    \multicolumn{1}{c}{\multirow{6}[4]{*}{$M\to S$}} & \multicolumn{1}{c}{\multirow{3}[3]{*}{TOHAN}} & 0     & \textbf{26.7} & \textbf{28.6} & 29.5  & \textbf{29.6} & \textbf{30.5} & 32.1  & \textbf{33.2} \\
          &       & 5     & 26.2  & 28.4  & 28.9  & 29.1  & 30.2  & 31.4  & 32.5 \\
          &       & 20    & 25.8  & 26.9  & 29.8  & 27.4  & 29.8  & \textbf{32.7} & 32.8 \\
\cmidrule{2-10}          & \multicolumn{1}{c}{\multirow{3}[3]{*}{DEG-Net}} & 0     & 27.2  & \textbf{28.5} & 29.7  & \textbf{30.7} & 32.9  & 33.7  & 34.9 \\
          &       & 5     & \textbf{27.3} & 28.2  & 29.6  & 29.4  & 32.8  & 33.8  & 35.4 \\
          &       & 20    & 26.4  & 27.3  & \textbf{30.2} & 28.9  & \textbf{33.5} & \textbf{35.0} & \textbf{36.4} \\
    \bottomrule
    \end{tabular}%
  \label{tab:addlabel}%
\end{table}%

\section{Additional Analysis}
\label{Asec:Analysis}
\paragraph{Augmentation Techniques on the FHA Problem}

In this section, we compare the accuracy of the target classifier trained by TOHAN and that of TOHAN with the basic geometric data augmentation for the FHA problem over the digit tasks. The geometric data augmentation technique has been widely explored to diversify the image data \citep{shorten2019survey}. In our experiment, we randomly choose one or more augmentation techniques: resizing, shifting, cropping, and slight rotations (1 and 20 and -1 to -20) for the generated data in TOHAN. The classifier accuracy on the target domain of our method over 4 experiments and the average accuracy is shown in Table~\ref{tab:augmentation}.

It is clear that the performance of the augmentation techniques is worse than our method in general. It may be caused by the fact that the generated images are similar and even the same as the few target data. The diversity of generated data is still low with data augmentation. The accuracy of the augmentation is basically the same as TOHAN's. The improvement brought by the augmentation is more obvious while the number of the target data is increasing.

\paragraph{Diversity Analysis of DEG-Net}
In this section, we compare the diversity of generated data of DEG-Net with that of TOHAN and target data. Because of the difficulty of calculating log-influence, we use the HSIC to measure the diversity of data. Considering that the generated batch in the training process is 32, we calculate the HSIC measure with the 32 sample data. Table~\ref{tab:diversity} shows the diversity of the different data. It is clear that the diversity loss in DEG-Net works well to make the generated data more diverse.

\paragraph{Data Efficiency Analysis of DEG-Net}
In this section, we conduct the experiments in the tasks $M \to S$ and $M \to U$ to analyze the efficiency of the generated data. Following the architecture of the DEG-Net, we use the Eq.~(\ref{equ:loss_gd}) to train the conditional generator and obtain the following loss to update classifier $f_t$:
\begin{equation}
\label{equ:loss_f_generated}
\begin{split}
    \mathcal{L^*}_{f}= &-\gamma \hat{\mathbb{E}}\left[y_{\mathcal{G}_{1}} \log \left(D\left(\phi\left(\mathcal{G}_{2}\right)\right)\right)-y_{\mathcal{G}_{3}} \log \left(D\left(\phi\left(\mathcal{G}_{4}\right)\right)\right)\right]
\\ &+\hat{\mathbb{E}}\left[\ell\left(f_{t}\left(x_{t}\right)), f_{t}^{*}\left(X_{t}\right)\right)\right]+\hat{\mathbb{E}}\left[\ell\left(f_{t}\left(x_{g}\right)), f_{t}^{*}\left(x_{g}\right)\right)\right],
\end{split}
\end{equation}
where $x_g$ is the generated data and $f_t^*(x_g)$ is the label of the generated data. We use the different numbers of the generated data by TOHAN \citep{chi2021tohan} and DEG-Net to train the classifier and the classification accuracy is shown in Table~\ref{tab:generated}. It is clear that the performance of using data generated by TOHAN is almost the same as just using labeled data. In addition, the data generated by DEG-Net can not improve the performance of the model while the number of target data per class is small. It may be caused by that the generated data is similar to the label target data, so that add the almost same data for the training will bring little improvement. However, it is worth noting that the improvement will be large if the number of data generated by DEG-Net is more than 5 per class. This phenomenon indicates that the data generated by DEG-Net is more independent of the existing target data and could be treated as the new ones to some degree.


\end{document}